\newcommand{\w}{\mathbf{w}} 
\newcommand{\Pa}{\mathrm{Pa}}   
\newcommand{\M}{\mathcal{M}}  
\newcommand{\V}{\mathcal{V}}  
\newcommand{\E}{\mathcal{E}}  
\newcommand{\G}{\mathcal{G}}  
\newcommand{\D}{\mathcal{D}}  
\newcommand{\F}{\mathcal{F}}  
\newcommand{\X}{\mathcal{X}}  
\newcommand{\Y}{\mathcal{Y}}  
\newcommand{\C}{\mathcal{C}}  
\newcommand{\dict}[1]{\texttt{\{#1\}}}
\theoremstyle{definition}
\newtheorem{example}{Example}[section]
\newtheorem{theorem}{Theorem}[section]
\newtheorem{corollary}{Corollary}[theorem]
\begin{document}

\title{Reducing Causality to Functions with Structural Models}
\author{Tianyi Miao\\
University of Pennsylvania\\
\texttt{mtianyi@sas.upenn.edu}
}
\date{}
\maketitle

\begin{abstract}
The precise definition of causality is currently an open problem in philosophy and statistics. We believe causality should be defined as functions (in mathematics) that map causes to effects. We propose a reductive definition of causality based on Structural Functional Model (SFM). Using delta compression and contrastive forward inference, SFM can produce causal utterances like "X causes Y" and "X is the cause of Y" that match our intuitions. We compile a dataset of causal scenarios and use SFM in all of them. SFM is compatible with but not reducible to probability theory. We also compare SFM with other theories of causation and apply SFM to downstream problems like free will, causal explanation, and mental causation.

\textbf{Keywords}: Causal Modeling, Causation, Actual Causality
\end{abstract}

\section{Introduction}
What is causation? What does it mean to say one thing causes another? Is it possible to define causation in non-causal terms?

We can easily find examples where "correlation doesn't imply causation." Ice cream sales are positively correlated with deaths by drowning, but ice cream doesn't cause drowning. However, this doesn't tell us what causation really is. While probabilistic independence and correlation coefficients have clear mathematical definitions, the precise definition of causality remains a subject of ongoing debate.

Embracing a functional theory of causation, we argue that causality essentially \textit{is} functions that map causes to effects.
While functions are distinct from probability theory and sufficiently general for scientific purposes, we can place additional constraints and formalize \textit{Structural Functional Model} (SFM), which better fit intuitions in \textit{causal utterances}:
\begin{enumerate}
\item Forward inference from causes to effects:
\begin{itemize}
\item What if X? Y.
\item Had it been X, it would have been Y.
\end{itemize}
\item \textit{Actual causality} (separating "actual causes" from background conditions):
\begin{itemize}
\item X causes/doesn't cause Y.
\item X is/isn't the cause of Y.
\item What is the cause of Y? X.
\end{itemize}
\end{enumerate}

Throughout this paper, the word "function" exclusively denotes a mathematical function (Appendix \ref{math_foundations}). We'll never use it to mean "intended purpose or task" as in "the functions of cellphones include texting." The word "functional" is only used as the adjective form of "function."

For SFM, we'll explicitly separate its representation, inference, and learning \citep{koller2009pgm}:
\begin{itemize}
\item \textit{Representation} is the declarative model of "what the world is like."
\item \textit{Inference} assumes the representation is correct and answers queries regarding particular instances, such as computing values of unknown variables given known variables.
\item \textit{Learning} inductively constructs a representation from empirical data.
\end{itemize}
Such decoupling allows us to design general-purpose inference and learning algorithms that work for different task-specific representations.

\section{Representation: A Roadmap}\label{roadmap}
In this section, we build the representation of SFM by incrementally adding functions, directed graphs, composition, contrast, and delta compression into a unified model. Each additional component will help SFM better fit intuitions about causal utterances, sometimes at the cost of generality.

Motivated by theoretical and pragmatic benefits like simplicity, expressiveness, and computational efficiency, the definition of SFM is unambiguous, mathematical, and reductive. It contains no circular definition because it doesn't rely on causal concepts like intervention and agency.

\subsection{Causal Relata}\label{roadmap:relata}
When we say "$X$ causes $Y$", what kinds of things are $X$ and $Y$? How do we represent a world? Classifying by causal relata, there are 4 kinds of causal relationships \citep{sep-causation-metaphysics}:
\begin{itemize}
\item \textbf{Token causation}: I frequently water my flower in my garden, causing it to grow tall.
\item \textbf{Type causation}: Watering a plant frequently causes it to grow tall.
\item \textbf{Token influence}: How much I water my flower in my garden influences how tall it grows.
\item \textbf{Type influence}: How much a plant is watered influences how tall it grows.
\end{itemize}
Influence relates variables (a variable can have one of many values); causation relates values of variables.

Tokens are specific; types are general. Since this type-token distinction applies to non-causal models too, it's not central to causality. SFM doesn't endorse any particular theory of physics or metaphysics, so it's up to the user to specify how variables correspond to real-world things.

Formally, let $\V$ be a set of \textit{nodes} (we use "nodes" instead of "variables" to avoid confusion with random variables) and $\D$ be a function that maps nodes to their domains. For node $u \in \V$, its \textit{domain} $\D[u]$ is the set of values it can take on. An \textit{assignment} is a function that maps each node to a value in its domain.
\begin{itemize}
\item A \textit{complete assignment} $\w: \V \to \bigcup_{u \in \V} \D[u]$ assigns values to all nodes, satisfying $\forall u\in\V: \w(u)\in\D[u]$.
\item A \textit{partial assignment} $\w_{|\X}: \X \to \bigcup_{u \in \X} \D[u]$ assigns values to a subset $\X \subseteq \V$ of nodes, satisfying $\forall u\in\X: \w_{|\X}(u)\in\D[u]$.
\item $\w_{|\X} \subseteq \w$ iff $\forall u\in\X: \w_{|\X}(u)=\w(u)$.
\end{itemize}

We use dictionary notations $\dict{node1:value1, node2:value2, \dots}$ for assignments (and discrete finite functions in general). Nodes, values, and assignments are different things.
Influence relates nodes (\texttt{Water} influences \texttt{Growth}), while causation relates assignments ($\dict{Water:High}$ causes $\dict{Growth:Tall}$).

\begin{enumerate}
\item The set of all complete assignments forms the Cartesian product $\prod_{u \in \V} \D[u]$.
\item A \textit{team} $R$ is a set of complete assignments \citep{vaananen2007dependence}, so $R \subseteq \prod_{u \in \V} \D[u]$. $R$ is a relation.
\item For a modal/counterfactual/possible-world interpretation, each complete assignment $\w$ is a world.
Each node is a feature/property/aspect/variable of the world.
$R$ is the set of possible worlds; $(\prod_{u \in \V} \D[u]) \setminus R$ is the set of impossible worlds.
\item For a database interpretation, each $\w$ is an individual/person/record/item. $R$ is a population containing many individuals. Each node is a property/attribute/feature of that individual.
\item A complete assignment $\w$ \textit{satisfies} team $R$ iff $\w \in R$.
\item A team $R$ is \textit{satisfiable} iff $R$ is nonempty. $R$ is unsatisfiable iff $R = \emptyset$.
\item If any domain $\D[u]$ is empty, the Cartesian product $\prod_{u \in \V} \D[u]$ is empty and there's no satisfiable $R$, so we'll only consider nonempty domains.
\item An assignment $\w_{|\X}$ is \textit{permitted} by $R$ iff $\exists \w \in R: \w \supseteq \w_{|\X}$.
We call this $\w$ an \textit{induced complete assignment} of $\w_{|\X}$.
\item Partial assignments $\w_{|\X_1}, \w_{|\X_2}, \dots, \w_{|\X_k}$ are \textit{compatible} with each other iff $\exists \w \in R: \forall i \in \{1, 2, \dots, k\}: \w \supseteq \w_{|\X_i}$.
\end{enumerate}

\textit{We will say "$\X$ influences $\Y$" and "$\w_{|\X}$ causes $\w_{|\Y}$," where $\X$ and $\Y$ are sets of nodes; $\w_{|\X}$ and $\w_{|\Y}$ are partial assignments.}

\subsection{A Functional Theory of Causation}\label{roadmap:function}
Many causal scenarios are not reducible to probability theory. For example, flipping the light switch turns on the light, but doesn't affect the TV. This system of electric circuits is deterministic and fully-specified. We can consistently predict the "independence" between light switch and TV and what would happen given the switches' status, using functions alone without probabilities.

According to the \textit{functional theory of causality}, causality essentially \textit{is} mathematical functions (left-total, right-unique relations) that map causes to effects.
\citet{russell1912notion} briefly mentions that the cause (functionally) determines the effect.
\citet{simon1966cause} explicitly defend that causation is "a function of one variable (the cause) on to another (the effect)."
Structural Causal Model (SCM) \citep{pearl2009causality} uses multi-input single-output functions in structural equations to represent "laws" or "mechanisms" of the world.

"Causality as functions" becomes immediately obvious once it's pointed out. For example,
\begin{enumerate}
\item In $y = f(x)$, we call $x$ the independent variable and $y$ the dependent variable, like how effects depend on causes.
\item Describing "rain influences wheat growth" with $\texttt{WheatGrowth} = f(\texttt{Rain})$, the input-output mappings are:
\begin{itemize}
\item With no rain, wheat doesn't grow.
\item With moderate rain, wheat grows moderately.
\item With heavy rain, wheat grows very well.
\end{itemize}
\item The light-switch-and-TV example can be described by  $\texttt{Light}=f_1(\texttt{LightSwitch})$ and $\texttt{TV}=f_2(\texttt{TVSwitch})$.
\end{enumerate}

Two key properties distinguish functions from other kinds of relations:
\begin{enumerate}
\item \textbf{Right-uniqueness}: 1 input value cannot simultaneously associate with 2 or more different output values. Functions can only be many-to-one or one-to-one, never one-to-many.
This explains why causes "necessitate" or "are sufficient for" their effects (given the underlying function).
\item \textbf{(Possible) non-injectiveness}: Some functions can map different input values to the same output value, like $y=x^2$ over real numbers. Non-injective functions cannot be inverted. This explains the asymmetry of causation: different causes can lead to the same effect.
\end{enumerate}

\textit{Functional dependencies} are properties of a team $R \subseteq \prod_{u \in \V} \D[u]$: For $\X, \Y \subseteq \V$,
\begin{enumerate}
\item \textbf{Value-level dependency}: We say "$\Y$ functionally depends on $\w_{|\X}$" ($\w_{|\X} \xrightarrow{.} \Y$) or "$\w_{|\Y}$ functionally depends on $\w_{|\X}$" ($\w_{|\X} \xrightarrow{.} \w_{|\Y}$) when given $\w_{|\X}$, there exists exactly one $\w_{|\Y}$ that's compatible with $\w_{|\X}$.
\item \textbf{Node-level dependency}: We say "$\Y$ functionally depends on $\X$" ($\X \xrightarrow{.} \Y$) when $\w_{|\X} \xrightarrow{.} \Y$ for every permitted $\w_{|\X}$.
\item Value-level and node-level dependencies can be different. In $\w(Y) = \w(X_1) \lor \w(X_2) \lor \w(X_3)$, value-level $\{X_1: 1\} \xrightarrow{.} \{Y: 1\}$ is true; node-level $\{X_1\} \xrightarrow{.} \{Y\}$ is false; node-level $\{X_1, X_2, X_3\} \xrightarrow{.} \{Y\}$ is true.
\end{enumerate}

Node-level functional dependency $\X \xrightarrow{.} \Y$ satisfies right-uniqueness: $\forall \w_1, \w_2 \in R: (\w_{1|\X} = \w_{2|\X}) \Rightarrow (\w_{1|\Y} = \w_{2|\Y})$.
So there's a function $f: \{\w_{|\X} \ | \exists \w\in R: \w\supseteq\w_{|\X}\} \to \{\w_{|\Y}\ |\exists \w \in R: \w\supseteq\w_{|\Y}\}$ such that $\forall \w \in R: \w_{|\Y} = f(\w_{|\X})$.
We thus define \textit{functional determination}:
\begin{enumerate}
\item \textbf{Node-level determination}: We say "$\X$ functionally determines $\Y$ via $f$" ($\X \xrightarrow{f} \Y$) when $\forall \w \in R: \w_{|\Y} = f(\w_{|\X})$.
\item \textbf{Value-level determination}: We say "$\w_{|\X}$ functionally determines $\w_{|\Y}$ via $f$" ($\w_{|\X} \xrightarrow{f} \w_{|\Y}$) when $\X \xrightarrow{f} \Y$ and $\w_{|\Y} = f(\w_{|\X})$.
\end{enumerate}
In compliance with conventions from dependence logic \citep{sep-logic-dependence} and relational databases \citep{database-2020}, functional dependency $\X \xrightarrow{.} \Y$ doesn't contain $f$, while our functional determination $\X \xrightarrow{f} \Y$ does.

Influence is node-level functional determination; causation is value-level functional determination. In $\w_{|\Y} = f(\w_{|\X})$, $\w_{|\X}$ is the cause, $\w_{|\Y}$ is the effect, and $f$ is an underlying mechanism/law-of-nature (since $\w_{|\Y} = f(\w_{|\X})$ is true in every possible world $\w \in R$).

Generally, causality is the study of functional dependency (e.g. Armstrong's Axioms), functional determination, and relational independence \citep{gradel2013dependence}. It's nontrivial because these concepts cannot be reduced to probability theory.

\textit{We say "$\w_{|\X}$ causes $\w_{|\Y}$" when $\X \xrightarrow{f} \Y$ and $\w_{|\Y}=f(\w_{|\X})$. We say "$\X$ influences $\Y$" when $\X \xrightarrow{.} \Y$.}

\subsection{Directed Graphs}\label{roadmap:directed_graphs}
Previously, we first have a team $R$ and then find functional determinations as properties of $R$. Now we take the opposite direction. We start with a set of functional determinations $\textbf{FDet} = \{\X_1 \xrightarrow{f_1} \Y_1, \X_2 \xrightarrow{f_2} \Y_2, \dots \X_n \xrightarrow{f_n} \Y_n\}$, which then select $R_{\textbf{FDet}} \subseteq \prod_{u \in \V} \D[u]$ as all $\w$ that satisfies \textbf{FDet}. Here "all" is necessary for defining a unique $R_{\textbf{FDet}}$, because functional dependencies and determinations are downward-closed (if $R_1$ satisfies \textbf{FDet}, then any subset $R_2 \subseteq R_1$ also satisfies \textbf{FDet} \citep{sep-logic-dependence}).

When we draw diagrams to illustrate causal relationships, we want arrows to point from causes to effects.
Structural Causal Model (SCM) \citep{pearl2009causality, pearl2009overview, halpern2005causes1} generalizes this intuition, subsumes the graphical and potential-outcome frameworks, and is the most popular causal model in statistics, econometrics, and epidemiology. Our SFM inherits the following ideas from SCM:
\begin{enumerate}
\item A causal system is represented as a (usually finite and acyclic) directed graph.
\item One mechanism's effect can be another mechanism's cause. One function's output can be another function's input.
\item A node's value is functionally determined by the values of its parents.
\item Unlike SCM, our SFM doesn't use "intervention" in its definition at all (Section \ref{compare:scm}).
\end{enumerate}

Besides nodes $\V$ and domains $\D$, an SFM $\M = (\V, \E, \D, \F)$ also has:
\begin{enumerate}
\item $\E \subseteq \V \times \V$ is a set of directed edges.
\item In a directed graph $\G = (\V, \E)$, a node $u$ is \textit{exogenous} (exo-node $u \in \V_{exo}$) iff it's a root node; otherwise, it's \textit{endogenous} (endo-node $u \in \V_{endo}$).
\item We write exo-assignment $\w_{|\V_{exo}}$ as $\w_{exo}$ and endo-assignment $\w_{|\V_{endo}}$ as $\w_{endo}$.
\item $\F$ maps every endo-node $u \in \V_{endo}$ to exactly one \textit{structural function} $\F[u]: (\prod_{p \in \Pa(u)} \D[p]) \to \D[u]$.
\item $\F[u]: \w_{|\Pa(u)} \mapsto \w(u)$ maps an assignment over $u$'s parents to a value of $u$.
\item $R_{\M}=\{\w \in \prod_{u \in \V} \D[u] \ |\ \forall u \in \V_{endo}: \w(u) = \F[u](\w_{|\Pa(u)})\}$ is the set of all complete assignments satisfying $\M$.

Equivalently, $\M$ specifies functional determinations $\textbf{FDet}_{\M} = \{\Pa(u) \xrightarrow{f_u} \{u\} \}_{u \in \V_{endo}}$, where $f_u(\w_{|\Pa(u)})=\{u:\F[u](\w_{|\Pa(u)})\}$.
\end{enumerate}

\begin{example}\label{example:abcd}
Consider SFM $\M = (\V, \E, \D, \F)$:
\begin{itemize}
\item $\V = \{A, B, C, D, E\}$
\item $\E = \{(A, B), (B, D), (C, D), (C, E)\}$
\item $\D = \{A: \mathbb{R}, B: \mathbb{R}, C: \mathbb{R}, D: \mathbb{R}, E: \mathbb{R}\}$
\item For simplicity, we'll abuse notations and write $\F[u](\w_{|\Pa(u)})$ as $\F[u](\w)$:

$\F[B](\w)=\w(A)^2\\ \F[D](\w)=\w(B)+\w(C)\\ \F[E](\w)=\w(C)\times 7$
\end{itemize}
\begin{figure}[H]\label{fig:abcde_full}
\centering
\begin{tikzpicture}
\node[shape=circle,draw=black,style=thick] (A) at (0,2.5) {A};
\node[shape=circle,draw=black,style=thick] (B) at (0,1.25) {B};
\node[shape=circle,draw=black,style=thick] (C) at (2,1.25) {C};
\node[shape=circle,draw=black,style=thick] (D) at (1,0) {D};
\node[shape=circle,draw=black,style=thick] (E) at (3,0) {E};

\path [->, style=thick] (A) edge node[left] {$\F[B](\w) = \w(A)^2$} (B);
\path [->, style=thick] (B) edge node[left] {$\F[D](\w)=\w(B)+\w(C)$} (D);
\path [->, style=thick] (C) edge node[left] {} (D);
\path [->, style=thick] (C) edge node[right] {$\F[E](\w)=\w(C)\times 7$} (E);
\end{tikzpicture}

\caption{A simple finite acyclic SFM.}
\end{figure}
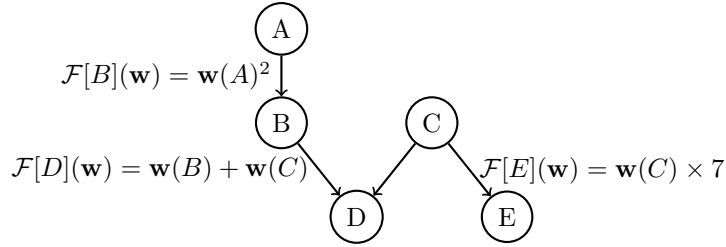

\begin{itemize}
\item $A, C \in \V_{exo}$ are exo-nodes; $B, D, E \in \V_{endo}$ are endo-nodes.
\item $A \to B \to D$ forms a causal chain, $B \to D \leftarrow C$ forms a "common effect" structure, and $D \leftarrow C \to E$ forms a "common cause" structure.
\item $\{A: i, B: -1, C: 10, D: 9, E: 70\}$ isn't an assignment over $(\V, \D)$, because the complex number $i \notin \mathbb{R}$ is outside of $A$'s domain.
\item $\{A: 2, B: 2, C: 2, D: 2, E: 2\}$ is a complete assignment over $(\V, \D)$, but it doesn't satisfy $\M$.
\item $\{A: 3, B: 9, C: -\pi, D: 9-\pi, E: -7\pi\}$ is a complete assignment that satisfies $\F$, so $\M$ is satisfiable.
\item Therefore, partial assignments $\{A: 3, B: 9\}$ and $\{D: 9-\pi, E: -7\pi\}$ are permitted and compatible with each other.
\item $\{D: -10, E: 7\}$ isn't permitted because no $\w \in R_{\M}$ extends it.
\end{itemize}
\end{example}

Some design choices of SFM inevitably restrict the kinds of functional dependencies that we can talk about:
\begin{enumerate}
\item For simplicity, we only consider finite nodes because no important application requires an infinite SFM.

\item Not every set of functional determinations be covered (entailed) by an SFM, even if we allow cycles.

Consider $\V = \{X, Y, Z\}$ with real-valued domains, the team $R_1 = \{\w |\  \w(X)^2 = \w(Y) = \w(Z)^2\}$ has functional determinations $\{X\} \xrightarrow{\w(Y)=\w(X)^2} \{Y\}$ and $\{Z\} \xrightarrow{\w(Y)=\w(Z)^2} \{Y\}$. There's no SFM $\M$ with $R_{\M}=R_1$.

Generally, SFM cannot represent one node being functionally determined by multiple "separate" functions/mechanisms, each individually sufficient for its value. This differs from symmetric overdetermination (Section \ref{benchmark:overdetermination}), which is just multi-input Boolean OR.

\item The intersection of SFMs, however, can cover any set of functional determinations.

We say $\w$ satisfies the \textit{SFM-intersection} over $(\M_1, \M_2, \dots, \M_n)$ if $\w \in \bigcap_{i=1}^{n} R_{\M_i}$ ($\w$ satisfies every individual $\M_i$).
\begin{theorem}
For any set of functional determinations $\textbf{FDet}$ over finite $\V$, there exists a finite SFM-intersection that covers it.
\end{theorem}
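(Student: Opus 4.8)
The plan is to realize each functional determination by a single SFM and then take the intersection. Given $\X\xrightarrow{f}\Y$ in $\textbf{FDet}$ (we may assume $\X\cap\Y=\emptyset$, after dropping the components of $f$ on the overlap, which are identity projections), I would build the SFM $\M_{\X,f,\Y}$ on the same $\V,\D$ whose endo-nodes are exactly the members of $\Y$, with $\Pa(y)=\X$ and $\F[y]$ the $y$-component of $f$, i.e. the map $\w_{|\X}\mapsto f(\w_{|\X})(y)$, for each $y\in\Y$, and every node outside $\Y$ exogenous. Its only edges run from $\X$ to $\Y$, so the graph is finite and acyclic, $\M_{\X,f,\Y}$ is a legitimate SFM, and $R_{\M_{\X,f,\Y}}=\{\w:\w_{|\Y}=f(\w_{|\X})\}$ straight from the definition of $R_\M$. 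Since a complete assignment satisfies $\textbf{FDet}$ exactly when it satisfies every determination in it, $\bigcap_{(\X,f,\Y)\in\textbf{FDet}}R_{\M_{\X,f,\Y}}=R_{\textbf{FDet}}$. When $\textbf{FDet}$ is finite --- in particular whenever all domains are finite --- this already gives a finite SFM-intersection covering $\textbf{FDet}$. (If $\X=\emptyset$, or more generally whenever an intended parent set is empty, give the node one dummy parent --- any other node, with $\F$ ignoring its value --- or a self-loop if $\V$ is a singleton, since the SFM definition does not forbid cycles.)

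For a possibly infinite $\textbf{FDet}$ the task is to collapse infinitely many SFMs into finitely many, and here finiteness of $\V$ becomes essential. First I would decompose each determination: $\X\xrightarrow{f}\Y$ is equivalent to the atomic constraints $\w(y)=h(\w_{|\X})$ for $y\in\Y$, where $h$ is the map $\w_{|\X}\mapsto f(\w_{|\X})(y)$, so that $R_{\textbf{FDet}}$ is the intersection of a (possibly infinite) family of atomic constraints, each tagged by a shape $(S,y)$ with $y\notin S\subseteq\V$. Because $\V$ is finite there are only finitely many shapes; I would group the atomic constraints by shape, write $H_{S,y}$ for the set of functions $h$ occurring with shape $(S,y)$, and set $C_{S,y}:=\bigcap_{h\in H_{S,y}}\{\w:\w(y)=h(\w_{|S})\}$, so that $R_{\textbf{FDet}}=\bigcap_{(S,y)}C_{S,y}$. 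It then suffices to cover each $C_{S,y}$ by finitely many SFMs.

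The crux is that each $C_{S,y}$ is covered by at most two SFMs, however large $H_{S,y}$ is. Fix any $h_0\in H_{S,y}$ (if $H_{S,y}$ is empty there is no constraint of this shape) and let $A:=\{a\in\prod_{s\in S}\D[s]:h(a)=h_0(a)\text{ for every }h\in H_{S,y}\}$ be the agreement set; then $C_{S,y}=\{\w:\w(y)=h_0(\w_{|S})\text{ and }\w_{|S}\in A\}$. Let $\M^{(1)}$ have sole endo-node $y$ with $\Pa(y)=S$ and $\F[y]=h_0$, and let $\M^{(2)}$ be the same except that $\F[y]=\tilde h$, where $\tilde h(a)=h_0(a)$ for $a\in A$ and $\tilde h(a)$ is some element of $\D[y]$ other than $h_0(a)$ for $a\notin A$ --- possible precisely when $|\D[y]|\ge 2$, whereas if $|\D[y]|=1$ then $H_{S,y}$ has a single function and $\M^{(1)}$ alone covers $C_{S,y}$. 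Then $R_{\M^{(1)}}\cap R_{\M^{(2)}}=\{\w:\w(y)=h_0(\w_{|S})=\tilde h(\w_{|S})\}$, and $h_0$ and $\tilde h$ agree at $a$ exactly when $a\in A$, so this intersection is precisely $C_{S,y}$. Collecting the at most two SFMs attached to each of the finitely many shapes yields a finite SFM-intersection whose team is $\bigcap_{(S,y)}C_{S,y}=R_{\textbf{FDet}}$.

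I expect this compression step to be the main obstacle. The one-SFM-per-determination construction breaks once $\textbf{FDet}$ is infinite, and a single SFM cannot in general realize $C_{S,y}$, because $C_{S,y}$ may pin down $\w(y)$ only over a non-functional region $A$ of its inputs (imagine $A$ being a circle, with $\w(y)$ forced to $0$ exactly on that circle). What makes it go through is that finiteness of $\V$ bounds the number of shapes, and that the entire agreement set can be packed into one suitably chosen structural function, so that two SFMs do the work of arbitrarily --- even infinitely --- many same-shape constraints. Everything else --- that each $\M_{\X,f,\Y}$ is a legitimate finite acyclic SFM, the atomic-constraint decomposition, and the identity $R_{\textbf{FDet}}=\bigcap_{(S,y)}C_{S,y}$ --- is routine.
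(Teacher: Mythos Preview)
Your first paragraph is essentially the paper's proof: build one SFM per determination $\X_i\xrightarrow{f_i}\Y_i$ on the full node set $\V$, with edge set $\X_i\times\Y_i$ and structural function at each $y\in\Y_i$ given by the $y$-component of $f_i$, then intersect. The paper stops there, justifying finiteness of the intersection with the single line ``Since $\V$ is finite, $\textbf{FDet}$ is finite.'' You go further: you observe that this implication is not automatic when domains are infinite (there can be uncountably many functions of a given shape $(\X,\Y)$), and you supply a genuine compression argument for possibly infinite $\textbf{FDet}$ --- decompose into atomic constraints, group by the finitely many shapes $(S,y)$, and realize each shape-class by at most two SFMs via the agreement-set construction with the twisted function $\tilde h$. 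That argument is correct and is strictly more than the paper provides; taken at face value it patches a gap in the paper's one-line finiteness claim.

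One small correction on your edge cases: components of $f$ on the overlap $\X\cap\Y$ need not be identity projections (e.g.\ $\X=\Y=\{y\}$ with $f(\{y{:}a\})=\{y{:}2a\}$ encodes the nontrivial constraint $\w(y)=0$). The paper's construction simply takes $\E_i=\X_i\times\Y_i$, which produces a self-loop in that case; since this theorem precedes the paper's restriction to acyclic SFMs, that is tacitly permitted. Your handling of the empty-$\X$ case is fine.
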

\begin{proof}
Since $\V$ is finite, $\textbf{FDet}$ is finite.
For every $\X_i \xrightarrow{f_i} \Y_i$ in $\textbf{FDet}$, we construct $\M_i = (\V, \E_i, \D, \F_i)$ with edges $\E_i = \X_i \times \Y_i$ and structural functions $\F_i[y]: \w_{|\X_i} \mapsto f_i(\w_{|\X_i})(y)$ for $y \in \Y_i$. The SFM-intersection over all $\M_i$ entails $\textbf{FDet}$.
\end{proof}

An \textit{SFM-intersection-proper} is an SFM-intersection that cannot be entailed by an SFM.

Besides $\w(X)^2=\w(Y)=\w(Z)^2$, SFM-intersection-proper can express autonomous differential equations like $\frac{d}{dt} x(t) = f(x(t))$ while SFM cannot. The differential operator $\frac{d}{dt}$ is also a function, so we derive 2 functional determinations: $A \xrightarrow{\frac{d}{dt}} B$ and $A \xrightarrow{f} B$. Here $\{A: x(t), B: x'(t)\}$ is permitted iff $x'(t)=f(x(t))$.

\item Why do people dislike SFM-intersection?

It's nearly impossible to find an uncontrived, everyday causal system that's only describable by SFM-intersection-proper. \citet{kim2005physicalism} even explicitly formulates the \textit{Principle of Causal Exclusion} against "more than one sufficient cause" in this spirit.
This intuitive dislike is unjustified, but when taken as a primitive desideratum, it entails people's preference of some SFMs over others for modeling reality.

We suggest 2 possible reasons for disliking SFM-intersection-proper:
\begin{enumerate}
\item Intersection of multiple SFMs creates too much mental computational burden and people prefer simpler models.

In many cases (Section \ref{learn:flagpole}, \ref{apply:mental}), people dislike the very form of SFM-intersection, even though the underlying $R=R_{\M}$ can be modeled by some SFM $\M$.
\item SFM-intersection-proper suffers from the \textit{possibly-unsatisfiable-laws objection (PULO)}, which applies to any set of functional dependencies $\textbf{FDep}=\{\X_i \xrightarrow{.} \Y_i\}_{i=1}^{n}$ such that some $\{f_i\}_{i=1}^{n}$ makes $\textbf{FDet}=\{\X_i \xrightarrow{f_i} \Y_i\}_{i=1}^{n}$ unsatisfiable.

No world satisfies $\textbf{FDet}$, but our actual world exists, so we must reject $\textbf{FDet}$. PULO takes one unjustified step further, suggesting that $\textbf{FDep}$ should also be rejected, even if some other $\{g_i\}_{i=1}^{n}$ makes $\{\X_i \xrightarrow{g_i} \Y_i\}_{i=1}^{n}$ satisfiable, because $\textbf{FDep}$ "opens the gate" to unsatisfiable laws. From another perspective, PULO expresses a desire for guaranteed satisfiability under any function set.

For example, $\textbf{FDep} = \{\{X\} \xrightarrow{.} \{Y\}, \{Z\} \xrightarrow{.} \{Y\}\}$ suffers from PULO because $\textbf{FDet} = \{\{X\} \xrightarrow{\w(Y)=\w(X)^2} \{Y\}, \{Z\} \xrightarrow{\w(Y)=-\w(Z)^2-1} \{Y\}\}$ is unsatisfiable over real-valued domains.

\end{enumerate}

\item Why do we make SFM acyclic?

PULO strikes again: When there are self-loops or cycles in the graph, there exist function sets that make the SFM unsatisfiable, such as $A=A+1$ and $\{A=B+1; B=A+1\}$:

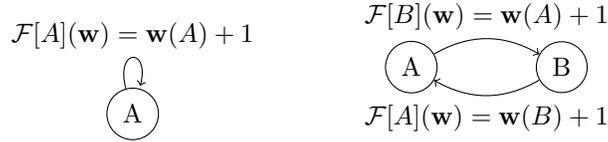
\begin{figure}[H]
  \centering
  \begin{tikzpicture}[node distance=2cm]
    \node[circle, draw] (A) {A};
    \path[->] (A) edge [loop above] node [above] {$\F[A](\w)=\w(A)+1$} (A);
  \end{tikzpicture}
  \hspace{1cm}
  \begin{tikzpicture}[node distance=2cm]
    \node[circle, draw] (A) {A};
    \node[circle, draw, right of=A] (B) {B};
    \path[->] (A) edge [bend left] node [above] {$\F[B](\w)=\w(A)+1$} (B)
              (B) edge [bend left] node [below] {$\F[A](\w)=\w(B)+1$} (A);
  \end{tikzpicture}
  \caption{SFM with a self-loop (left) and SFM with a cycle (right); their structural functions are not satisfiable over real-valued domains.}
\end{figure}

\item Besides simplicity and intuitive appeals, finite acyclic SFM $\M$ has other nice properties (Section \ref{infer:forward}):
\begin{itemize}
\item $\M$ is satisfiable for any $\F$.
\item $\V_{exo}$ functionally determines $\V_{endo}$ via $\w_{endo} \subseteq \w = \texttt{VFI}(\M, \w_{exo})$.
\end{itemize}
Are they worth the price of rejecting many (possibly satisfiable) sets of functional dependencies? We're unsure.

\item Different SFMs $\M_1 \ne \M_2$ over the same $(\V, \D)$ can be "semantically equivalent" $R_{\M_1}=R_{\M_2}$, which entails "$\w_{|\Y}=f(\w_{|\X})$ in $\M_1$ iff $\w_{|\Y}=f(\w_{|\X})$ in $\M_2$", including $\texttt{VFI}(\M_1, \w_{exo})=\texttt{VFI}(\M_2, \w_{exo})$ for all $\w_{exo}$.
\end{enumerate}

\textit{We'll only consider functional determinations that can be modeled by finite acyclic SFMs, where an endo-node is functionally determined by its parents.}

\subsection{Composition and Decomposition}\label{roadmap:compose}
Since $\w_{exo}$ functionally determines $\w_{endo}$ via $\w_{endo}\subseteq\texttt{VFI}(\M, \w_{exo})$ (Section \ref{infer:forward}), we produce all causal utterances as "$\w_{exo}$ causes $\w_{endo}$."

This syntax is simple, but an ostensible flaw is that only exo-assignments can be causes. In $A \to B \to C$, we cannot say "$\{B: b\}$ causes $\{C: c\}$" because $B$ is an endo-node. This problem is solved by considering the sub-SFM $B \to C$, where $B$ becomes an exo-node. Sub-SFM generalizes \citet{pearl2009causality}'s surgical intervention, which cuts off all incoming edges to the nodes under intervention.

$\M_{sub} = (\V_{sub}, \E_{sub}, \D_{sub}, \F_{sub})$ is a \textit{sub-SFM} of $\M = (\V, \E, \D, \F)$ when:
\begin{enumerate}
\item $(\V_{sub}, \E_{sub})$ is a subgraph of $(\V, \E)$, i.e. $\V_{sub} \subseteq \V$, $\E_{sub} \subseteq \E$, and $(u, v) \in \E_{sub} \Rightarrow (u \in \V_{sub}) \land (v \in \V_{sub})$.
\item $\forall u \in \V_{sub|endo}: \Pa_{sub}(u)=\Pa(u)$.

\item $\forall u \in \V_{sub}: \D_{sub}[u] = \D[u]$
\item $\forall u \in \V_{sub|endo}: \F_{sub}[u] = \F[u]$
\end{enumerate}
An exo-node in $\M$ can be nonexistent or exogenous in $\M_{sub}$; an endo-node in $\M$ can be nonexistent, exogenous, or endogenous (with the same parents and structural function) in $\M_{sub}$, so the mechanisms-of-nature are preserved.

We can \textit{compose} a set of smaller SFMs $\{\M_1, \M_2, \dots, \M_m\}$ into a bigger SFM $\M$ without altering any structural function, if the following prerequisites are met for any pair of $(\M_i, \M_j)$:
\begin{enumerate}
\item $\forall u \in \V_i \cap \V_j: \D_i[u]=\D_j[u]$
\item $\forall u \in \V_{i|endo} \cap \V_{j|endo}: \Pa_i(u) = \Pa_j(u)$
\item $\forall u \in \V_{i|endo} \cap \V_{j|endo}: \F_i(u) = \F_j(u)$
\end{enumerate}
These prerequisites ensure that the composition $\M = (\bigcup_{i=1}^m \V_i, \bigcup_{i=1}^m \E_i, \bigcup_{i=1}^m \D_i, \bigcup_{i=1}^m \F_i)$ is well-defined. For $\bigcup_{i=1}^m \D_i$ and $\bigcup_{i=1}^m \F_i$,
\begin{enumerate}
\item $\D_i$ maps nodes to domains.
\item $\F_i$ maps nodes to structural functions.
\item Functions (including $\D$ and $\F$) are binary relations.
\item The union of sets/relations/functions is well defined.
\item The prerequisites ensure that each node $u$ has exactly one unique $\D[u]$ and at most one unique $\F[u]$ across all $i$, so $\bigcup_{i=1}^m \D_i$ and $\bigcup_{i=1}^m \F_i$ are right-unique and thus functions.
\end{enumerate}

The \textit{decomposition} of SFM $\M$ is a set of sub-SFMs $\{\M_1, \M_2, \dots, \M_m\}$ that can compose into $\M$. While composition of sub-SFMs (when allowed) is unique, there can be multiple different decompositions of an SFM, the most trivial being "keeping the original SFM itself" and the most fragmented being "one sub-SFM for each endo-node and its parents."

Composition shows how small, local, and simple sub-mechanisms can be pieced together into one big, global, and complex system, while decomposition breaks down a large system into small sub-mechanisms. Therefore, we can deductively reason about a big, unrepeatable event using its components and their interconnections.

\textit{With composition-decomposition, we can say "$\w_{exo}$ causes $\w_{endo}$" relative to some sub-SFM.}

\subsection{Contrastive Causation}\label{roadmap:contrast}
Currently, SFM can already perfectly express a causal system by correctly answering all "what's $\w_{endo}$ if $\w_{exo}$" questions. But in causal utterances, people only say "the actual causes" and omit background conditions (Section \ref{benchmark:background}).
The selection of actual causes takes 2 steps: contrast and omission. We'll discuss contrast in this section.

\citet{schaffer2005contrastive} believes causation is contrastive. Besides the 2-argument surface form (cause, effect), the 4-argument underlying form includes contrast on both sides:
\begin{itemize}
\item Surface form: Pam's throwing the rock caused the window to shatter.
\item Contrastive form 1: Throwing the rock (rather than the pebble) caused the window to shatter (rather than crack).
\item Contrastive form 2: Throwing the rock (rather than not throwing it) caused the window to shatter (rather than remain intact).
\end{itemize}

We specify 2 assignments $\w_a, \w_c$ for contrastive causal utterance "$\w_{a|exo}$ (rather than $\w_{c|exo}$) causes $\w_{a|endo}$ (rather than $\w_{c|endo}$)":
\begin{enumerate}
\item \textit{Actual assignment} $\w_{a}$ corresponds to the actual world (i.e. what actually happens).
\item \textit{Contrastive assignment} $\w_{c}$ is selected using one of two heuristics:
\begin{enumerate}
\item $\w_c$ is a default/expected/normal/typical world; $\w_a$ is an anomalous/unexpected deviation from the default.

Normality inevitably comes with value judgments, but contrast reduces "finding the actual causes" to "finding a default world," which is a nontrivial simplification.
\item With $\w_{a}$ available first, we tweak $\w_{a|exo}$ into $\w_{c|exo}$ by changing the values of a few exo-nodes of interest. We then obtain $\w_c = \texttt{VFI}(\M, \w_{c|exo})=\texttt{CFI}(\M, \w_a, \w_{c|exo})$ through forward inference (Section \ref{infer:forward}).

This is common when too many nodes in $\w_a$ have non-default values, or when there's no appropriate default world.
\end{enumerate}
\end{enumerate}
Our contrastive causation is slightly simpler than \citet{schaffer2005contrastive}'s and \citet{halpern2015graded}'s, because we only need to specify one contrastive world $\w_c$ (rather than many).

Contrast is common in our causal intuition:
\begin{enumerate}
\item People often characterize causality as "changing the cause will also change the effect" or "making a difference." Ignoring the manipulation aspect of an agent changing an object, change is inherently contrastive - there's an old state that changes to a new state.
\item Some philosophers try to define "event $X$ causes event $Y$" as "$X$ raises the probability of $Y$ ($\Pr[Y|X] > \Pr[Y|\lnot X]$)." This definition fails to address causal asymmetry and spurious correlations \citep{sep-causation-probabilistic}, so it's never popular among statisticians. However, the very idea of "raising" contains a contrast between a world with $X$ and a world with $\lnot X$.
\item The contrast of treatment effects is formalized in statistical causal inference. Using the potential outcome notations in \citet{hernan2020causal},
\begin{itemize}
\item causal risk difference: $\Pr[Y^{a=1} = 1] - \Pr[Y^{a=0} = 1]$
\item causal risk ratio: $\frac{\Pr[Y^{a=1} = 1]}{\Pr[Y^{a=0} = 1]}$
\item causal odds ratio: $\frac{\Pr[Y^{a=1} = 1] / \Pr[Y^{a=1} = 0]}{\Pr[Y^{a=0} = 1] / \Pr[Y^{a=0} = 0]}$
\end{itemize}
These measurements all involve a contrast between random variables $Y^{a=0}$ (effect under treatment 0) and $Y^{a=1}$ (effect under treatment 1).
\item To understand a function $y=f(x)$, we often record an initial input value $x_0$ and its corresponding output value $y_0=f(x_0)$; we then change $x_0$ to $x_1$  and see how the output value $y$ changes in response. For example, derivatives in calculus help quantify how "sensitive" the output is with respect to the input.
\end{enumerate}

\textit{With actual assignment $\w_a$ and contrastive assignment $\w_c$, we say "$\w_{a|exo}$ (rather than $\w_{c|exo}$) causes $\w_{a|endo}$ (rather than $\w_{c|endo}$)."}

\subsection{Delta Compression}\label{roadmap:delta}
To characterize omission in causal utterances, we consider $\C = \{u \in \V | \w_a(u) \ne \w_c(u)\}$: the nodes that have different values in $\w_a$ and $\w_c$. $|\C|$ is the \textit{Hamming distance} between $\w_a$ and $\w_c$. With $\C_{exo}= \C \cap \V_{exo}$ and $\C_{endo}= \C \cap \V_{endo}$, the final causal utterance is "$\w_{a|\C_{exo}}$ causes $\w_{a|\C_{endo}}$."

If something doesn't change, we don't mention it. We only mention the new values of changed nodes. This is an example of \textit{delta compression} \citep{suel2019delta}:

\textit{Encoder} wants to transmit a \textit{target file} to \textit{Decoder}. Encoder and Decoder can both access a \textit{reference file}. The target file is only slightly different from the reference file, so their \textit{delta} (change/difference) is much smaller than the target file itself. To reduce the amount of transferred data, Encoder computes the delta (using target and reference files) and sends it to Decoder; Decoder reconstructs the target file by adding the delta to the reference file.

Delta compression is widely used in version control, where we want to store many successive versions of the same file, but any 2 consecutive versions differ only slightly.

\begin{example}
Consider nodes $\{A, B, C, D\}$ with integer domains and assignments $\w_0, \w_1$:
\begin{enumerate}
\item $\w_0 = \{A:1,B:2,C:3,D:4\}$
\item $\w_1 = \{A:1,B:7,C:3,D:5\}$
\item $\C = \{B, D\}$
\item $\w_{0|\C} = \{B:2,D:4\}$
\item $\w_{1|\C} = \{B:7,D:5\}$
\end{enumerate}
\end{example}

People may prefer delta compression because it shortens causal utterances without losing information or introducing ambiguities. This saving of "mental bandwidth" is especially prominent when:
\begin{enumerate}
\item We want to represent many $\w_1$ relative to one $\w_0$.
\item Each $\w_1$ differs only slightly from $\w_0$, i.e. $|\C|$ is small relative to $|\V|$.
\end{enumerate}
In default-actual contrasts, the default $\w_c$ is kept constant for reference; in actual-tweaked contrasts, $\w_a$ is held for reference.

\textit{We say $\w_{a|\C_{exo}}$ causes $\w_{a|\C_{endo}}$, where $\C = \{u \in \V | \w_a(u) \ne \w_c(u)\}$ is the set of changed nodes.}

\section{Inference}\label{infer:section}
\subsection{Constraint Satisfaction}\label{infer:constraint}
During \textit{inference}, we assume the SFM $\M$ is true. An inference algorithm takes in assignment $\w_{|\X}$ over known nodes $\X \subseteq \V$ and a set of target nodes $\Y$, whose values we're interested in inferring. It then checks whether $\w_{|\X}$ is permitted and if so, returns one or more $\w_{|\Y}$ that's compatible with $\w_{|\X}$.

If all domains are finite, we can formulate SFM inference as a constraint satisfaction problem (CSP) \citep{russell2010artificial} and use off-the-shelf CSP solvers for inference:
\begin{enumerate}
\item The domain of node $u \in \V$ is $\D[u]$.
\item For each $u \in \V_{endo}$, its structural equation gives a $(|\Pa(u)|+1)$-ary constraint $\w(u) = \F[u](\w_{|\Pa(u)})$ over scope $\Pa(u) \cup \{u\}$.
\item Each known value $w_u=\w_{|\X}(u)$ for $u \in \X$ is a unary constraint $\w(u)=w_u$ over scope $\{u\}$.
\end{enumerate}

CSP does have a few drawbacks:
\begin{enumerate}
\item It's NP-complete in general.
\item It's unnecessary for most thought experiments, where the SFMs are small and solvable by hand.
\item It offers no guarantee for the existence or uniqueness of $\w_{|\Y}$. For example, if $y=f(x)$ is non-injective, different $x$ can be compatible with the same $y$.
\end{enumerate}

Thanks to right-uniqueness, inferring effects from causes is much easier.

\subsection{Forward Inference}\label{infer:forward}
\textit{Forward inference} infers effects from causes.
Given SFM $\M$, \textit{vanilla forward inference} (VFI) computes $\w = \texttt{VFI}(\M, \w_{exo})$, where $\w \supseteq \w_{exo}$ and of $\w \in R_{\M}$.

When $\G=(\V, \E)$ is finite and acyclic (and $\forall u \in \V: \D[u] \ne \emptyset$),
\begin{itemize}
\item $\M=(\V, \E, \D, \F)$ is satisfiable for any $\F$;
\item $\w_{exo}$ functionally determines $\w_{endo}$; \texttt{VFI} itself is a function.
\end{itemize}
Intuitively, \texttt{VFI} deterministically infers all effects given the root causes and mechanisms-of-nature.

\begin{theorem}\label{theorem:forward_infer}
\textbf{(Forward Inference)} In a finite acyclic SFM $\M$ (with nonempty domains), for any exo-assignment $\w_{exo}$, there exists a unique complete assignment $\w$ satisfying $\w \supseteq \w_{exo}$ and $\w \in R_{\M}$.
\end{theorem}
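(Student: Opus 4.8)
The plan is to exploit acyclicity by building $\w$ along a topological order of $\G=(\V,\E)$ and then proving both existence and uniqueness by induction along that order. First I would invoke the standard fact that a finite directed acyclic graph admits a topological ordering $u_1,u_2,\dots,u_n$ of $\V$ with the property that every edge $(u_i,u_j)\in\E$ satisfies $i<j$; in particular, for every endo-node $u_k$, all of $\Pa(u_k)$ appear strictly earlier in the list. Finiteness of $\V$ is what makes this ordering well-defined and guarantees the construction below terminates, while acyclicity is what makes the ordering exist at all.

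For existence, I would define $\w$ recursively along this order: put $\w(u)=\w_{exo}(u)$ whenever $u\in\V_{exo}$, and $\w(u)=\F[u](\w_{|\Pa(u)})$ whenever $u\in\V_{endo}$. This is well-posed because when we reach an endo-node $u_k$, the partial assignment $\w_{|\Pa(u_k)}$ is already fully defined (all parents precede $u_k$ in the order), and $\F[u_k]$ takes values in the nonempty set $\D[u_k]$, so $\w(u_k)\in\D[u_k]$ and $\w$ is a genuine complete assignment. By construction $\w\supseteq\w_{exo}$, and every endo-node satisfies $\w(u)=\F[u](\w_{|\Pa(u)})$, hence $\w\in R_{\M}$.

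For uniqueness, suppose $\w'$ is any complete assignment with $\w'\supseteq\w_{exo}$ and $\w'\in R_{\M}$. I would show $\w'(u_k)=\w(u_k)$ by strong induction on $k$. If $u_k\in\V_{exo}$, both $\w'$ and $\w$ extend $\w_{exo}$, so they agree at $u_k$. If $u_k\in\V_{endo}$, the induction hypothesis gives $\w'_{|\Pa(u_k)}=\w_{|\Pa(u_k)}$ (every parent has index $<k$), and since $\w',\w\in R_{\M}$ we get $\w'(u_k)=\F[u_k](\w'_{|\Pa(u_k)})=\F[u_k](\w_{|\Pa(u_k)})=\w(u_k)$. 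Therefore $\w'=\w$.

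I expect no deep obstacle; the only point needing care is making the recursion and induction rigorous by taking the topological order as the underlying well-founded relation, and noting explicitly that it is acyclicity — not merely finiteness — that licenses this, since a cycle would destroy the ``all parents precede'' property and, as the PULO discussion indicates, existence can then fail outright. A minor bookkeeping remark is that an exo-node of $\M$ lies in the domain of $\w_{exo}$ and an endo-node has its structural function defined on the relevant parent tuple by definition of an SFM and of an exo-assignment, so no extra hypotheses are required.
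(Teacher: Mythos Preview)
Your proposal is correct and follows essentially the same approach as the paper: construct $\w$ along a topological order for existence, and argue uniqueness along that same order. The only cosmetic difference is that you phrase uniqueness as strong induction on the topological index, whereas the paper phrases it as a minimal-counterexample contradiction; these are equivalent formulations of the same argument.
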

\begin{proof}

\begin{itemize}
\item \textbf{Existence}: Because $\G$ is finite, $\V$ is finite. Because $\G$ is acyclic, there exists a \textit{topological order} $L$ of nodes: an ordered list of all nodes such that $[(L[i], L[j]) \in \E] \Rightarrow [i < j]$. Using \textit{topological sort} algorithms like depth-first-search and Kahn's algorithm, we can compute $L$ in $\Theta(|\V|+|\E|)$ time; cycle detection is done simultaneously \citep{cormen2022clrs}.

Given $\w_{exo}$, we compute $\w_1$ sequentially from $i=1$ to $i=|\V|$ inclusive:
\begin{enumerate}
\item If $L[i] \in \V_{exo}$, we assign $\w_{1}(L[i]) \leftarrow \w_{exo}(L[i])$.
\item If $L[i] \in \V_{endo}$, we assign $\w_1(L[i]) \leftarrow \F[L[i]](\w_{1|\Pa(L[i])})$.
\item Any parent $L[j] \in \Pa(L[i])$ must appear earlier ($j < i$) than its child $L[i]$ because $L$ is a topological order. $\w_1(L[j])$ must have already been assigned, so $\w_{1|\Pa(L[i])}$ is well-defined.
\end{enumerate}
Because $\w_1(L[i])$ isn't modified after iteration $i$:
\begin{enumerate}
\item If $L[i] \in \V_{exo}$, $\w_1(L[i])=\w_{exo}(L[i])$ is always satisfied.
\item If $L[i] \in \V_{endo}$, $\w_1(L[i]) = \F[L[i]](\w_{1|\Pa(L[i])})$ is always satisfied.
\end{enumerate}
Therefore, $\w_1 \supseteq \w_{exo}$ and $\w_1$ satisfies $\M$.

This constructive proof also specifies the algorithm $\w=\texttt{VFI}(\M, \w_{exo})$, assuming every structural function $\F[u]$ is computable.

\item \textbf{Uniqueness}: Proving by contradiction, suppose instead that there's another $\w_2 \ne \w_1$ satisfying $\w_2 \supseteq \w_{exo}$ and $\w_2 \in R_{\M}$. With topological order $L$, there exists a smallest integer $i$ such that $\w_1(L[i]) \ne \w_2(L[i])$.

Because $L$ is a topological order, every parent $L[j] \in \Pa(L[i])$ appears earlier ($j < i$). Since $L[i]$ is the earliest node with different values, $\forall L[j] \in \Pa(L[i]): \w_1(L[j])=\w_2(L[j])$ and $\w_{1|\Pa(L[i])} = \w_{2|\Pa(L[j])}$.

Because functions are right-unique, $\F[L[i]](\w_{1|\Pa(L[i])}) = \F[L[i]](\w_{2|\Pa(L[i])})$. Because $\w(L[i])$ is only modified at iteration $i$, $\w_1(L[i])=\w_2(L[i])$, which contradicts $\w_1(L[i]) \ne \w_2(L[i])$. Therefore,  $\w_1 = \w_2$; the induced complete assignment from an exo-assignment is unique.
\end{itemize}
\end{proof}

Existence entails left-totality; uniqueness entails right-uniqueness, so $\texttt{VFI}(\M, \w_{exo})$ itself is a function of $\w_{exo}$.
Since $\w_{exo}$ functionally determines $\w$ and $\w_{endo} \subseteq \w$, Armstrong's Axioms entail "$\w_{exo}$ functionally determines $\w_{endo}$."

During forward inference, $\G$ is also a computational graph, where edges indicate the order of computation. We start with exo-nodes and the computation "flows down" to endo-nodes, computing their values based on the previously computed values of their parents. Topological sort and graph traversal both take $\Theta(|\V|+|\E|)$ time under adjacency-list representation of graphs. For each $u \in \V_{endo}$, $\F[u]$ is computed exactly once.

\begin{corollary}\label{corollary:permit_exo}
In a finite acyclic SFM $\M$ with nonempty domains, any partial assignment $\w_{|\X}$ over any subset of exo-nodes $\X \subseteq \V_{exo}$ is permitted.
\end{corollary}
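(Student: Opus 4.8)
The plan is to reduce the corollary directly to Theorem~\ref{theorem:forward_infer} (Forward Inference) by first completing the given partial assignment into a full exo-assignment. First I would observe that $\X \subseteq \V_{exo}$, so $\V_{exo} \setminus \X$ consists only of exo-nodes; since every domain $\D[u]$ is nonempty (a standing assumption for SFMs), I can choose, for each $u \in \V_{exo} \setminus \X$, some value $v_u \in \D[u]$, and define $\w_{exo}$ to agree with $\w_{|\X}$ on $\X$ and to take the value $v_u$ on each remaining exo-node. By construction $\w_{exo}$ is a complete assignment over $\V_{exo}$ (an exo-assignment) with $\w_{exo} \supseteq \w_{|\X}$.

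Next I would invoke Theorem~\ref{theorem:forward_infer}: for this $\w_{exo}$ there exists a complete assignment $\w = \texttt{VFI}(\M, \w_{exo})$ with $\w \supseteq \w_{exo}$ and $\w \in R_{\M}$. Chaining the inclusions gives $\w \supseteq \w_{exo} \supseteq \w_{|\X}$, so $\w$ is a complete assignment in $R_{\M}$ extending $\w_{|\X}$. By the definition of \emph{permitted} in Section~\ref{roadmap:relata} (an assignment $\w_{|\X}$ is permitted by $R$ iff $\exists \w \in R:\ \w \supseteq \w_{|\X}$), this witness $\w$ shows $\w_{|\X}$ is permitted, and in fact $\w$ is an induced complete assignment of $\w_{|\X}$.

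There is essentially no hard step here; the corollary is a straightforward specialization of the existence half of Theorem~\ref{theorem:forward_infer}. The only points that require a moment's care are (i) using nonemptiness of the domains to guarantee that the extension $\w_{exo}$ exists at all, and (ii) using $\X \subseteq \V_{exo}$ crucially — if $\X$ contained endo-nodes we could not freely complete it to an exo-assignment, and indeed the analogous statement for arbitrary $\X \subseteq \V$ is false (as Example~\ref{example:abcd} shows, $\{D:-10, E:7\}$ is not permitted). So the proof is: extend to a full exo-assignment, run \texttt{VFI}, and read off that its output witnesses permittedness.
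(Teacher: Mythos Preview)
Your proposal is correct and follows essentially the same approach as the paper: extend $\w_{|\X}$ to a full exo-assignment using nonemptiness of the domains, apply Theorem~\ref{theorem:forward_infer} to obtain $\w = \texttt{VFI}(\M, \w_{exo}) \in R_{\M}$, and chain the inclusions $\w_{|\X} \subseteq \w_{exo} \subseteq \w$ to witness permittedness. Your added remarks on why nonemptiness and $\X \subseteq \V_{exo}$ are needed are accurate and match the paper's reasoning.
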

\begin{proof}
For every $u \in \V_{exo} \setminus \X$, we assign an arbitrary $\w_{exo}(u) \in \D[u]$ since $\D[u] \ne \emptyset$; for every $u \in \X$, we assign $\w_{exo}(u) \leftarrow \w_{|\X}(u)$, so $\w_{exo} \supseteq \w_{|\X}$. By Theorem \ref{theorem:forward_infer}, $\w=\texttt{VFI}(\M, \w_{exo})$ satisfies $\w_{exo} \subseteq \w \in R_{\M}$, so $\w_{|\X} \subseteq \w_{exo} \subseteq \w \in R_{\M}$ and $\w_{|\X}$ is permitted.
\end{proof}

\begin{corollary}
A finite acyclic SFM $\M$ with nonempty domains is always satisfiable, regardless of its structural functions $\F$.
\end{corollary}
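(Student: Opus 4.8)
The plan is to read off this statement directly from Theorem \ref{theorem:forward_infer}, since "satisfiable" just means $R_{\M} \ne \emptyset$, and the Forward Inference theorem already manufactures an explicit element of $R_{\M}$ from any exo-assignment. First I would note that an exo-assignment $\w_{exo}$ always exists: for each $u \in \V_{exo}$ the domain $\D[u]$ is nonempty by hypothesis, so we may pick some value $\w_{exo}(u) \in \D[u]$; this is the same construction already used in the proof of Corollary \ref{corollary:permit_exo}. (If $\V_{exo} = \emptyset$ — which for a finite acyclic graph forces $\V = \emptyset$ — the empty function serves as $\w_{exo}$.)

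Next I would apply Theorem \ref{theorem:forward_infer} to this $\w_{exo}$: because $\M$ is finite and acyclic with nonempty domains, there is a (unique) complete assignment $\w = \texttt{VFI}(\M, \w_{exo})$ with $\w \supseteq \w_{exo}$ and $\w \in R_{\M}$. Hence $R_{\M}$ contains $\w$, so $R_{\M} \ne \emptyset$, which is exactly the definition of $\M$ being satisfiable. The structural functions $\F$ enter only through the hypotheses of Theorem \ref{theorem:forward_infer}, which are satisfied for arbitrary $\F$, so the conclusion holds regardless of $\F$.

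There is essentially no obstacle beyond bookkeeping, since the work is entirely contained in Theorem \ref{theorem:forward_infer}. The two points worth stating carefully are: the degenerate case $\V = \emptyset$, where the empty complete assignment vacuously satisfies $\M$ (the defining condition over $\V_{endo}$ is vacuous); and the remark that nonemptiness of every domain is precisely what keeps $\prod_{u \in \V}\D[u]$ — and hence the pool of candidate $\w_{exo}$ — from being empty, which is why the opening step of picking $\w_{exo}$ is legitimate.
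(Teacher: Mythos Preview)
Your proof is correct and follows essentially the same idea as the paper's: build an exo-assignment from the nonempty domains and then invoke forward inference to land in $R_{\M}$. The only cosmetic difference is that the paper picks a value at a single root node (appealing to Appendix~\ref{trilemma} for its existence) and then routes through Corollary~\ref{corollary:permit_exo}, whereas you pick values at all exo-nodes and apply Theorem~\ref{theorem:forward_infer} directly; your version is a touch more streamlined and also explicitly handles the degenerate $\V=\emptyset$ case.
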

\begin{proof}
Because $\G$ is finite (no infinite regress) and acyclic, there exists at least one root node $u$ (Appendix \ref{trilemma}). Because $\D[u] \ne \emptyset$, we select an arbitrary value $\w_{|\{u\}}(u) \in \D[u]$. Corollary \ref{corollary:permit_exo} says $\w_{|\{u\}}$ is permitted, so $\exists \w \in R_{\M}: \w \supseteq \w_{|\{u\}}$ and $\M$ is satisfiable.
\end{proof}

\subsection{Functional Invariance}\label{infer:invariance}
We use \textit{functional invariance} to describe how a multi-input function's output doesn't change when some inputs have changed: TVs aren't affected by light switches; the output of $f(x, y) = 2x$ is invariant to $y$ given $x$. Notice that \textit{ceteris paribus} (holding other input values constant) is well-defined only if there's a clear input-output distinction given by an underlying function.

In SFM, changing an exo-node's value cannot influence its non-descendants. This is deduced from $\G$ alone. With non-injective functions, new parent values may map to the old child value, resulting in even fewer changed nodes. Equivalently, for $\X \subseteq \V_{exo}$, $\w_{|\V_{exo} \setminus \X}$) functionally determines $\X$'s non-descendants.

\begin{theorem}\label{theorem:invariance}
\textbf{(Invariance in SFM)} In a finite acyclic SFM $\M$ with $\w_1, \w_2 \in R_{\M}$ and changed nodes $\C = \{u \in \V | \w_0(u) \ne \w_1(u)\}$:

If $\w_0(u) \ne \w_1(u)$, then $u \in \bigcup_{v \in \C_{exo}}\mathrm{De}(v)$. ($u$'s value differs in $\w_0$ and $\w_1$ only if it's the descendant of some node in $\C_{exo}$.)
\end{theorem}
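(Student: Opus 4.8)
The plan is to run an induction along a topological order of the graph, pushing a value-difference at a node back to a value-difference at one of its parents via right-uniqueness of that node's structural function. First I would fix a topological order $L$ of $\V$, which exists because $\G$ is finite and acyclic (exactly as in the proof of Theorem~\ref{theorem:forward_infer}). I would then prove, by strong induction on the index $i$ with $u = L[i]$, the statement: if $\w_0(u) \ne \w_1(u)$ then $u \in \bigcup_{v \in \C_{exo}} \mathrm{De}(v)$. Throughout I adopt the convention $v \in \mathrm{De}(v)$, so that an exogenous changed node counts as being among its own descendants.

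For the base-type case, suppose $u = L[i]$ is exogenous and $\w_0(u) \ne \w_1(u)$. Then $u \in \C \cap \V_{exo} = \C_{exo}$, so $u \in \mathrm{De}(u) \subseteq \bigcup_{v \in \C_{exo}} \mathrm{De}(v)$, with nothing to induct on; in particular this covers $i = 1$, since $L[1]$ is necessarily a root. For the inductive step, suppose $u = L[i]$ is endogenous and $\w_0(u) \ne \w_1(u)$. Because $\w_0, \w_1 \in R_{\M}$, both satisfy the structural equation at $u$, i.e. $\w_\ell(u) = \F[u](\w_{\ell|\Pa(u)})$ for $\ell \in \{0,1\}$. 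Since $\F[u]$ is a function, the contrapositive of right-uniqueness forces $\w_{0|\Pa(u)} \ne \w_{1|\Pa(u)}$, so there is a parent $p \in \Pa(u)$ with $\w_0(p) \ne \w_1(p)$, i.e. $p \in \C$. Because $L$ is a topological order and $(p,u) \in \E$, $p$ appears strictly before $u$ in $L$, so the induction hypothesis applies to $p$: there is some $v \in \C_{exo}$ with $p \in \mathrm{De}(v)$. Finally $u$ is a child of $p$, hence $u \in \mathrm{De}(p) \subseteq \mathrm{De}(v)$ by transitivity of the descendant relation, completing the step.

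Since every node $u$ with $\w_0(u) \ne \w_1(u)$ equals $L[i]$ for some $i$, the induction yields the theorem; read contrapositively, any node lying outside $\bigcup_{v \in \C_{exo}} \mathrm{De}(v)$ keeps its value between $\w_0$ and $\w_1$, which is the stated functional-invariance fact. I do not expect a genuine obstacle: the two things to be careful about are (i) stating the convention $v \in \mathrm{De}(v)$, so that both the exogenous base case and the "a child is a descendant" closing step go through cleanly, and (ii) ensuring the induction is well-founded — which it is precisely because $\G$ is finite and acyclic, so the ancestor relation admits no infinite descending chains and the topological order supplies a concrete rank. One could equivalently phrase this as well-founded induction on the ancestor partial order, or by taking an $L$-minimal counterexample node, avoiding explicit reference to $L$; I would keep the topological-order phrasing for continuity with Theorem~\ref{theorem:forward_infer}.
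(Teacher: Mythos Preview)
Your proof is correct and takes essentially the same approach as the paper's: both use right-uniqueness of $\F[u]$ to push a value-difference at an endogenous node back to some parent, and both rely on the finite acyclic structure to guarantee the process bottoms out at an exogenous changed node. The only cosmetic difference is that the paper phrases this as a backward path-tracing argument (start at $u$, repeatedly step to a changed parent, terminate at some $s \in \C_{exo}$ via the trilemma in Appendix~\ref{trilemma}), whereas you formalize the same descent as strong induction along a topological order---exactly the alternative you yourself note at the end.
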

\begin{proof}
Let $u \in \C$ be any node such that $\w_0(u) \ne \w_1(u)$. If $u \in \V_{exo}$, then $u \in \C_{exo}$ and we're done. If $u \in \V_{endo}$, then because functions are right-unique, at least one parent $p \in \Pa(u)$ must have a different value ($\w_0(p) \ne \w_1(p)$). We consider $p$ as the new $u$ and repeat this process recursively. Because the SFM graph is finite (no infinite regress) and acyclic, this path $u \leftarrow p_1 \leftarrow p_2 \leftarrow \dots$ must terminate at some exo-node $s \in \V_{exo}$ (Appendix \ref{trilemma}) such that $\w_0(s) \ne \w_1(s)$, which means $s \in \C_{exo}$. The path shows $u$ is a descendant of $s$.
\end{proof}

\subsection{Contrastive Forward Inference}\label{sfm:contrast_forward}
Suppose we already have $\M$ and $\w_0 \in R_{\M}$. To compute $\texttt{VFI}(\M, \w_{1|exo})$, we still need to compute every $\F[u]$. Given all the unchanged nodes from functional invariance, can the graph structure help us reduce $\F[u]$ evaluations?

Yes. With $\C_{exo} = \{u \in \V_{exo}|\w_{0|exo}(u) \ne \w_{1|exo}(u)\}$, the \textit{contrastive forward inference} (CFI) algorithm $\w_1 = \texttt{CFI}(\M, \w_0, \w_{1|\C_{exo}})$ evaluates $\F[u]$ only when at least one parent of $u$ has a changed value, so we don't recompute non-descendants of $\C_{exo}$. \texttt{CFI} evaluates usually fewer (and always no more) structural functions than \texttt{VFI}, especially when $\C$ is small relative to $\V$, when there are many $\w_{i|exo}$ queries relative to one reference $\w_0$, or when many structural functions are non-injective.

\begin{algorithm}[H]
\caption{Contrastive forward inference algorithm}\label{algo:cfi}
\KwData{\begin{itemize}
\item $\M=(\V,\E,\D,\F)$, the SFM.
\item $L = [u_1, u_2, \dots, u_n]$, list of all nodes in topological order (cached).
\item $\w_0$, reference assignment. 
\item $\w_{1|\X}$, exo-assignment over $\X \subseteq \V_{exo}$ for the new forward inference query.
\end{itemize}}

\KwResult{\begin{itemize}
\item $\w_1$, the induced complete assignment from $\w_{1|\C_{exo}} \cup \w_{0|\V_{exo} \setminus \C_{exo}}$.
\end{itemize}}

$\w_1 \gets$  empty dictionary\;
$\mathbf{c} \gets$ empty dictionary; \tcp{whether a node's value changed}
\ForEach{$u \in \V$}{
    \uIf{$u \in \X$ and $\w_{1|\X}[u] \ne \w_{0}[u]$}{
        $\mathbf{c}[u] \gets 1$\\
    }
    \Else{$\mathbf{c}[u] \gets 0$\\}
}
\tcp{traverse the nodes in topological order}
\For{$i \gets 1$ \KwTo $|\V|$}{
$u_i \gets L[i]$\\
$r \gets \mathbf{c}[u_i]$ \tcp{1 if $u_i$ needs re-computation, else 0}
\ForEach{parent $p \in \Pa(u_i)$} {
$r \gets r \lor \mathbf{c}[p]$
}
\uIf{$r = 1$}{
\uIf{$u_i \in V_{exo}$}{
$\w_1[u_i] \gets \w_{1|\mathcal{S}}[u_i]$
}
\Else{
$f \gets \F[u_i]$\\
$\w_{\Pa(u_i)} \gets$ \{$p$: $\w_1(p)$ \textbf{for} $p \in \Pa(u_i)$\}\\
$val \gets f(\w_{\Pa(u_i)})$ \tcp{compute new value for $u_i$}
    \uIf{$val \ne \w_0[u_i]$}{
        $\w_1[u_i] \gets val$\\
        $\mathbf{c}[u_i] \gets 1$
    }
    \Else{
        $\w_1[u_i] \gets \w_0[u_i]$
    }
}
}
\Else{
$\w_1[u_i] \gets \w_0[u_i]$ \tcp{copy from reference assignment}}
}
\end{algorithm}

\clearpage

If we draw an SFM with all arrows pointing downwards, we visually cache a topological order of nodes. We can easily identify the descendants of changed nodes and only evaluate their structural functions, without recomputing the complete assignment.

Unlike functions, contrast isn't a fundamental and irreducible part of causality. It's just a popular heuristic with pragmatic benefits:
\begin{enumerate}
\item Delta compression reduces the length of causal utterances.
\item \texttt{CFI} recomputes (usually) fewer structural functions than \texttt{VFI} during forward inference.
\end{enumerate}

\subsection{Partial Forward Inference}
By modifying depth-first search, we can also design \textit{partial forward inference} algorithms, where we're only interested in a subset of endo-nodes $\Y \subseteq \V_{endo}$, so we don't have to compute values for all endo-nodes. Combined with \texttt{CFI}, it further reduces the number of function evaluations, especially when $\Y$ is much smaller than $\V_{endo}$.

\subsection{Inference in Practice}
\begin{enumerate}
\item \textbf{VFI in Boolean circuits}: A combinational logic circuit \citep{patt2020computing} is a finite acyclic SFM with $\{0, 1\}$ domains and Boolean functions.
Each wire's value is 0 (no electrical current) or 1 (has current). A logic gate receives input wires and returns an output wire, like a structural function.
The output wire of one gate can be the input wire of another gate.
To infer the values of all wires given all input wires, we use VFI and produce causal utterances like "setting this input wire to 1 causes the output wire to be 0."

\item \textbf{CFI in GNU Make}: GNU \texttt{make} is a popular open-source software that automatically determines which pieces of a large program need to be recompiled \citep{gnu_make_2023}. Especially in C and C++, the source code needs to be compiled or linked into a target file, before the target file can be executed by the computer. In a \texttt{Makefile}, there are many rules. Each rule has a target file, a list of source files, and a recipe for compilation. The target file functionally depends on the source files. The target file of one rule can be a source file in another rule. This forms a finite SFM where files are nodes and rules specify edges and structural functions.

VFI compiles all files, but software development is a dynamic process:
We don't compile the files just once. We modify some files, see the results, and repeat.
Because compilation is time-consuming, it's costly to recompile all files after a modification.
Instead, we only need to recompile the descendants of modified files. Just like CFI, \texttt{make} only recompiles the target file if any of its source files (parents) has been modified since the previous compilation, saving lots of time. We can produce causal utterances like "modifying this file causes the final compiled program to crash."
\end{enumerate}

\section{Learning}\label{learning:section}
Learning causal models from statistical data is covered in depth by \citet{pearl2009causality, hernan2020causal, peters2017elements}, so we only discuss some philosophical cases where people prefer some SFM over others, given fully-specified possible worlds and laws-of-nature.

\subsection{Thermometer and Temperature}
We think high room temperature causes high thermometer reading, but not the other way round. Why?

It's common to introduce new nodes and see whether the small model remains true as a sub-SFM of a bigger model. Consider a new node "immersing thermometer in cold water" and all possible worlds are listed below:

\begin{center}
\begin{tabular}{|c|c|c|c|}\hline
Node & \texttt{HighReading} & \texttt{HighTemperature} & \texttt{ColdWater} \\\hline
$\w_1$ & 0 & 0 & 0\\\hline
$\w_2$ & 1 & 1 & 0\\\hline
$\w_3$ & 0 & 0 & 1\\\hline
$\w_4$ & 0 & 1 & 1\\\hline
\end{tabular}
\end{center}

Without granting "intervention" any special status, we see that $\dict{HighReading} \xrightarrow{.} \dict{HighTemperature}$ and $\dict{HighReading, ColdWater} \xrightarrow{.} \dict{HighTemperature}$ aren't true in general, so the edge should point from \texttt{HighTemperature} to \texttt{HighReading}. People prefer simple SFMs that compose well with other SFMs that model the same world.

\subsection{Light, Object, and Shadow}\label{learn:flagpole}
In a symmetric equation involving \dict{Light, Object, Shadow}, any 2 nodes functionally determine the 1 remaining node. Why do we think the shadow is the effect? This asymmetric preference is entailed by people's general dislike of SFM-intersection:
\begin{itemize}
\item With multiple objects, $\dict{Light, Shadow} \xrightarrow{.} \dict{Object}$ isn't true in general. When we add another object whose shadow rests entirely in another object's shadow, the system's light and shadow remain the same, thus violating right-uniqueness.

\item $\dict{Light, Shadow} \xrightarrow{.} \dict{Object}$ cannot SFM-compose with $\dict{Factory} \xrightarrow{.} \dict{Object}$ (objects determined by their production processes). Explicitly encoding both functional dependencies requires SFM-intersection.

\item With one light source and multiple objects, $\dict{Object(i), Shadow(i)} \xrightarrow{.} \dict{Light}$ holds for every \texttt{Object(i)}, resulting in SFM-intersection.

\item $\dict{Object, Shadow} \xrightarrow{.} \dict{Light}$ cannot SFM-compose with $\dict{Hand} \xrightarrow{.} \dict{Light}$ (flashlight direction determined by hand movement) or $\dict{TimeOfDay} \xrightarrow{.} \dict{Light}$ (the Sun's position determined by time of the day), unless we use SFM-intersection.

\item $\dict{Object, Light} \xrightarrow{.} \dict{Shadow}$ can seamlessly compose with upstream and downstream SFMs without SFM-intersection.
\end{itemize}
\section{Benchmark}\label{benchmark:section}
Taking a data-centric approach, we compile a collection of thought experiments about causality and apply SFM to all of them. A good definition of causality should have no trouble fitting these causal scenarios. Unless otherwise mentioned, all domains are binary $\{0, 1\}$.

\subsection{Sensitive to Default}\label{benchmark:base}
\begin{itemize}
\item The assassin shoots the victim, causing the victim's death.
\begin{enumerate}
\item $\texttt{Assassin} \to \texttt{Death}$
\item $\F[\texttt{Death}](\w) = \w(\texttt{Assassin})$
\item Default $\w_c = \dict{Assassin:0, Death:0}$
\item Actual $\w_a = \dict{Assassin:1, Death:1}$
\item $\C_{exo} = \dict{Assassin}, \C_{endo}=\dict{Death}$
\item $\w_{a|\C_{exo}}=\dict{Assassin:1}$ causes $\w_{a|\C_{endo}}=\dict{Death:1}$.
\end{enumerate}

\item At the last moment, the assassin changes his mind and doesn't shoot, causing the victim's survival.
\begin{enumerate}
\item Same $\M$ as above.
\item Default $\w_c = \dict{Assassin:1, Death:1}$
\item Actual $\w_a = \dict{Assassin:0, Death:0}$
\item $\C_{exo} = \dict{Assassin}, \C_{endo}=\dict{Death}$
\item $\w_{a|\C_{exo}}=\dict{Assassin:0}$ causes $\w_{a|\C_{endo}}=\dict{Death:0}$.
\end{enumerate}
\end{itemize}

\subsection{Causal Chain}\label{benchmark:chain}
The assassin shoots a bullet, which kills the victim.
\begin{itemize}
\item The assassin causes both the bullet and the death.

\begin{enumerate}
\item $\texttt{Assassin} \to \texttt{Bullet} \to \texttt{Death}$
\item $\F[\texttt{Bullet}](\w) = \w(\texttt{Assassin})\\ \F[\texttt{Death}](\w) = \w(\texttt{Bullet})$
\item Default $\w_c = \dict{Assassin:0, Bullet:0, Death:0}$
\item Actual $\w_a = \dict{Assassin:1, Bullet:1, Death:1}$
\item $\C_{exo} = \dict{Assassin}, \C_{endo}=\dict{Bullet, Death}$
\item $\w_{a|\C_{exo}}=\dict{Assassin:1}$ causes $\w_{a|\C_{endo}}=\dict{Bullet:1, Death:1}$.
\end{enumerate}

\item (Sub-SFM) The bullet causes the death.

\begin{enumerate}
\item $\texttt{Bullet} \to \texttt{Death}$
\item $\F[\texttt{Death}](\w) = \w(\texttt{Bullet})$
\item Default $\w_c = \dict{Bullet:0, Death:0}$
\item Actual $\w_a = \dict{Bullet:1, Death:1}$
\item $\C_{exo} = \dict{Bullet}, \C_{endo}=\dict{Death}$
\item $\w_{a|\C_{exo}}=\dict{Bullet:1}$ causes $\w_{a|\C_{endo}}=\dict{Death:1}$.
\end{enumerate}
\end{itemize}

\subsection{Connected Double Prevention}
A bodyguard shoots the assassin before the assassin could shoot the victim. The victim survives.
\begin{itemize}
\item The bodyguard causes the assassin's death and the victim's survival.
\begin{enumerate}
\item $\texttt{Bodyguard} \to \texttt{Assassin} \to \texttt{Survive}$
\item $\F[\texttt{Assassin}](\w) = \neg \w(\texttt{Bodyguard})\\ \F[\texttt{Survive}](\w) = \neg\w(\texttt{Assassin})$
\item Actual $\w_a = \dict{Bodyguard:1, Assassin:0, Survive:1}$
\item $\C_{exo} = \dict{Bodyguard}$
\item Tweak $\w_{c|\C_{exo}}=\dict{Bodyguard:0}$
\item Tweaked $\w_c = \texttt{CFI}(\M, \w_a, \w_{c|\C_{exo}}) = \dict{Bodyguard:0, Assassin:1, Survive:0}$
\item $\C_{endo} = \dict{Assassin, Survive}$
\item $\w_{a|\C_{exo}}=\dict{Bodyguard:1}$ causes $\w_{a|\C_{endo}}=\dict{Assassin:0, Survive:1}$.
\end{enumerate}
\end{itemize}

\subsection{Disconnected Double Prevention}
The assassin puts poison in the victim's cup. The bodyguard puts antidote in the cup. The victim survives.
\begin{itemize}
\item Antidote causes the victim's survival.
\begin{enumerate}
\item \begin{tikzpicture}
\node(Poison) at (0,1) {\texttt{Poison}};
\node(Antidote) at (2,1) {\texttt{Antidote}};
\node(Survive) at (1,0) {\texttt{Survive}};

\path [->, style=thick] (Poison) edge (Survive);
\path [->, style=thick] (Antidote) edge (Survive);
\end{tikzpicture}
\item $\F[\texttt{Survive}](\w) = \neg \w(\texttt{Poison}) \lor \w(\texttt{Antidote})$
\item Actual $\w_a = \dict{Poison:1, Antidote:1, Survive:1}$
\item $\C_{exo}=\dict{Antidote}$
\item Tweak $\w_{c|\C_{exo}}=\dict{Antidote:0}$
\item Tweaked $\w_c = \texttt{CFI}(\M, \w_a, \w_{c|\C_{exo}}) = \dict{Poison:1, Antidote:0, Survive:0}$
\item $\C_{endo} = \dict{Survive}$
\item $\w_{a|\C_{exo}}=\dict{Antidote:1}$ causes $\w_{a|\C_{endo}}=\dict{Survive:1}$.
\end{enumerate}
\end{itemize}

\subsection{No Appropriate Default}\label{benchmark:no_default}
Two chess players use a coin flip to decide who moves first. If the coin lands on head, the Player 1 moves first; otherwise, Player 2 moves first. It's difficult to identify a "default" world \citep{blanchard2017cause}.

\begin{itemize}
\item Coin landing on head causes Player 1 to move first.
\begin{enumerate}
\item $\texttt{Head} \to \texttt{Player1}$
\item $\F[\texttt{Player1}](\w) = \w(\texttt{Head})$
\item Actual $\w_a = \dict{Head:1, Player1:1}$
\item $\C_{exo}=\dict{Head}$
\item Tweak $\w_{c|\C_{exo}}=\dict{Head:0}$
\item Tweaked $\w_c = \texttt{CFI}(\M, \w_a, \w_{c|\C_{exo}}) = \dict{Head:0, Player1:0}$
\item $\C_{endo} = \dict{Player1}$
\item $\w_{a|\C_{exo}}=\dict{Head:1}$ causes $\w_{a|\C_{endo}}=\dict{Player1:1}$.
\end{enumerate}
\item Coin landing on tail causes Player 2 to move first.
\begin{enumerate}
\item Same $\M$ as above.
\item Actual $\w_a = \dict{Head:0, Player1:0}$
\item $\C_{exo}=\dict{Head}$
\item Tweak $\w_{c|\C_{exo}}=\dict{Head:1}$
\item Tweaked $\w_c = \texttt{CFI}(\M, \w_a, \w_{c|\C_{exo}}) = \dict{Head:1, Player1:1}$
\item $\C_{endo} = \dict{Player1}$
\item $\w_{a|\C_{exo}}=\dict{Head:0}$ causes $\w_{a|\C_{endo}}=\dict{Player1:0}$.
\end{enumerate}
\end{itemize}

\subsection{Gardener and Queen}
The flower lives iff at least one person waters it. The gardener is responsible for watering the flower, but the queen isn't \citep{hart1985causation}. 

\begin{itemize}
\item The gardener's not watering the flower causes the flower's death; the queen's not watering it doesn't cause the flower's death.
\begin{enumerate}
\item \begin{tikzpicture}
\node(Gardener) at (0,1) {\texttt{Gardener}};
\node(Queen) at (2,1) {\texttt{Queen}};
\node(Flower) at (1,0) {\texttt{Flower}};

\path [->, style=thick] (Gardener) edge (Flower);
\path [->, style=thick] (Queen) edge (Flower);
\end{tikzpicture}
\item $\F[\texttt{Flower}](\w) = \w(\texttt{Gardener}) \lor \w(\texttt{Queen})$
\item Default $\w_c = \dict{Gardener:1, Queen:0, Flower:1}$
\item Actual $\w_a = \dict{Gardener:0, Queen:0, Flower:0}$
\item $\C_{exo} = \dict{Gardener}, \C_{endo}=\dict{Flower}$
\item $\w_{a|\C_{exo}}=\dict{Gardener:0}$ causes $\w_{a|\C_{endo}}=\dict{Flower:0}$.
\end{enumerate}
\end{itemize}

\subsection{OR Firing Squad (Symmetric Overdetermination)}\label{benchmark:overdetermination}
Two assassins simultaneously shoot the victim. It takes only 1 bullet to kill the victim.

\begin{itemize}
\item Both assassins are responsible because "not killing" is default.
\begin{enumerate}
\item \begin{tikzpicture}
\node(A1) at (0,1) {\texttt{Assassin1}};
\node(A2) at (2,1) {\texttt{Assassin2}};
\node(Death) at (1,0) {\texttt{Death}};

\path [->, style=thick] (A1) edge (Death);
\path [->, style=thick] (A2) edge (Death);
\end{tikzpicture}
\item $\F[\texttt{Death}](\w) = \w(\texttt{Assassin1}) \lor \w(\texttt{Assassin2})$
\item Default $\w_c = \dict{Assassin1:0, Assassin2:0, Death:0}$
\item Actual $\w_a = \dict{Assassin1:1, Assassin2:1, Death:1}$
\item $\C_{exo} = \dict{Assassin1, Assassin2}, \C_{endo}=\dict{Death}$
\item $\w_{a|\C_{exo}}=\dict{Assassin1:1, Assassin2:1}$ causes $\w_{a|\C_{endo}}=\dict{Death:1}$.
\end{enumerate}

\item Assassin 1 causes nothing because had he not shot, Assassin 2 would've still killed the victim.
\begin{enumerate}
\item Same $\M$ as above.
\item Actual $\w_a = \dict{Assassin1:1, Assassin2:1, Death:1}$
\item $\C_{exo}=\dict{Assassin1}$
\item Tweak $\w_{c|\C_{exo}}=\dict{Assassin1:0}$
\item Tweaked $\w_c = \texttt{CFI}(\M, \w_a, \w_{c|\C_{exo}}) = \dict{Assassin1:0, Assassin2:1, Death:1}$
\item $\C_{endo} = \emptyset$
\item $\w_{a|\C_{exo}}=\dict{Assassin1:1}$ causes $\w_{a|\C_{endo}}=\emptyset$.
\end{enumerate}
\end{itemize}

\subsection{AND Firing Squad}\label{benchmark:and_firing}
2 assassins simultaneously shoot the victim. It takes at least 2 bullets to kill the victim.

\begin{itemize}
\item Both assassins are responsible because "not killing" is default.
\begin{enumerate}
\item \begin{tikzpicture}
\node(A1) at (0,1) {\texttt{Assassin1}};
\node(A2) at (2,1) {\texttt{Assassin2}};
\node(Death) at (1,0) {\texttt{Death}};

\path [->, style=thick] (A1) edge (Death);
\path [->, style=thick] (A2) edge (Death);
\end{tikzpicture}
\item $\F[\texttt{Death}](\w) = \w(\texttt{Assassin1}) \land \w(\texttt{Assassin2})$
\item Default $\w_c = \dict{Assassin1:0, Assassin2:0, Death:0}$
\item Actual $\w_a = \dict{Assassin1:1, Assassin2:1, Death:1}$
\item $\C_{exo} = \dict{Assassin1, Assassin2}, \C_{endo}=\dict{Death}$
\item $\w_{a|\C_{exo}}=\dict{Assassin1:1, Assassin2:1}$ causes $\w_{a|\C_{endo}}=\dict{Death:1}$.
\end{enumerate}

\item Assassin 1 is individually responsible because had he not shot, the victim would've survived.
\begin{enumerate}
\item Same $\M$ as above.
\item Actual $\w_a = \dict{Assassin1:1, Assassin2:1, Death:1}$
\item $\C_{exo}=\dict{Assassin1}$
\item Tweak $\w_{c|\C_{exo}}=\dict{Assassin1:0}$
\item Tweaked $\w_c = \texttt{CFI}(\M, \w_a, \w_{c|\C_{exo}}) = \dict{Assassin1:0, Assassin2:1, Death:0}$
\item $\C_{endo} = \dict{Death}$
\item $\w_{a|\C_{exo}}=\dict{Assassin1:1}$ causes $\w_{a|\C_{endo}}=\dict{Death:1}$.
\end{enumerate}
\end{itemize}

\subsection{Connected Preemption}\label{benchmark:preemption_1}
Assassin 1 shoots the victim first. If the victim doesn't die, Assassin 2 will shoot. Had Assassin 1 not shot, the victim still would've died.

\begin{itemize}
\item Assassin 1 causes the victim's death and Assassin 2's not-shooting.
\begin{enumerate}
\item \begin{tikzpicture}
\node(A1) at (0,2) {\texttt{Assassin1}};
\node(A2) at (4,2) {\texttt{Assassin2}};
\node(D1) at (0,0) {\texttt{EarlyDeath}};
\node(D2) at (4,0) {\texttt{LateDeath}};

\path [->, style=thick] (A1) edge (D1);
\path [->, style=thick] (D1) edge (A2);
\path [->, style=thick] (D1) edge (D2);
\path [->, style=thick] (A2) edge (D2);
\end{tikzpicture}
\item $\F[\texttt{EarlyDeath}](\w) = \w(\texttt{Assassin1})\\\F[\texttt{Assassin2}](\w) = \neg \w(\texttt{EarlyDeath})\\\F[\texttt{LateDeath}](\w) = \w(\texttt{EarlyDeath}) \lor \w(\texttt{Assassin2})$
\item Actual $\w_a = \dict{Assassin1:1, EarlyDeath:1, Assassin2:0, LateDeath:1}$
\item $\C_{exo}=\dict{Assassin1}$
\item Tweak $\w_{c|\C_{exo}}=\dict{Assassin1:0}$
\item Tweaked $\w_c = \texttt{CFI}(\M, \w_a, \w_{c|\C_{exo}})\\= \dict{Assassin1:0, EarlyDeath:0, Assassin2:1, LateDeath:1}$
\item $\C_{endo} = \dict{EarlyDeath, Assassin2}$
\item $\w_{a|\C_{exo}}=\dict{Assassin1:1}$ causes $\w_{a|\C_{endo}}=\dict{EarlyDeath:1, Assassin2:0}$.
\item We cannot say $\dict{Assassin1:1}$ causes $\dict{LateDeath:1}$ because $\texttt{Latedeath} \notin \C_{endo}$.
\end{enumerate}
\end{itemize}

\subsection{Disconnected Preemption}\label{benchmark:preemption_2}
Assassin 1 shoots the victim first. Several moments later, Assassin 2 shoots unconditionally.

\begin{itemize}
\item Assassin 1 causes the victim's death.
\begin{enumerate}
\item \begin{tikzpicture}
\node(A1) at (0,2) {\texttt{Assassin1}};
\node(A2) at (4,2) {\texttt{Assassin2}};
\node(D1) at (0,0) {\texttt{EarlyDeath}};
\node(D2) at (4,0) {\texttt{LateDeath}};

\path [->, style=thick] (A1) edge (D1);
\path [->, style=thick] (D1) edge (D2);
\path [->, style=thick] (A2) edge (D2);
\end{tikzpicture}
\item $\F[\texttt{EarlyDeath}](\w) = \w(\texttt{Assassin1})\\\F[\texttt{LateDeath}](\w) = \w(\texttt{EarlyDeath}) \lor \w(\texttt{Assassin2})$
\item Actual $\w_a = \dict{Assassin1:1, Assassin2:1, EarlyDeath:1, LateDeath:1}$
\item $\C_{exo}=\dict{Assassin1}$
\item Tweak $\w_{c|\C_{exo}}=\dict{Assassin1:0}$
\item Tweaked $\w_c = \texttt{CFI}(\M, \w_a, \w_{c|\C_{exo}})\\= \dict{Assassin1:0, Assassin2:1, EarlyDeath:0, LateDeath:1}$
\item $\C_{endo} = \dict{EarlyDeath}$
\item $\w_{a|\C_{exo}}=\dict{Assassin1:1}$ causes $\w_{a|\C_{endo}}=\dict{EarlyDeath:1}$.
\end{enumerate}
\end{itemize}

Time difference distinguishes preemption from symmetric overdetermination: To an extreme, we wouldn't regard immediate death and death in 100 years as the same event.

\subsection{Relevant Background Conditions}\label{benchmark:background}
Ignition requires both striking the match and oxygen present, but we only mention striking the match as the cause of fire.

\begin{itemize}
\item Striking the match causes ignition.
\begin{enumerate}
\item \begin{tikzpicture}
\node(Strike) at (0,1) {\texttt{Strike}};
\node(Oxygen) at (2,1) {\texttt{Oxygen}};
\node(Fire) at (1,0) {\texttt{Fire}};

\path [->, style=thick] (Strike) edge (Fire);
\path [->, style=thick] (Oxygen) edge (Fire);
\end{tikzpicture}
\item $\F[\texttt{Fire}](\w) = \w(\texttt{Strike}) \land \w(\texttt{Oxygen})$
\item Default $\w_c = \dict{Strike:0, Oxygen:1, Fire:0}$
\item Actual $\w_a = \dict{Strike:1, Oxygen:1, Fire:1}$
\item $\C_{exo} = \dict{Strike}, \C_{endo}=\dict{Fire}$
\item $\w_{a|\C_{exo}}=\dict{Strike:1}$ causes $\w_{a|\C_{endo}}=\dict{Fire:1}$.
\end{enumerate}

\item While repeatedly striking a match in an oxygen-deprived container, there's no ignition. Pumping in oxygen causes the match to ignite.
\begin{enumerate}
\item Same $\M$ as above.
\item Default $\w_c = \dict{Strike:1, Oxygen:0, Fire:0}$
\item Actual $\w_a = \dict{Strike:1, Oxygen:1, Fire:1}$
\item $\C_{exo} = \dict{Oxygen}, \C_{endo}=\dict{Fire}$
\item $\w_{a|\C_{exo}}=\dict{Oxygen:1}$ causes $\w_{a|\C_{endo}}=\dict{Fire:1}$.
\end{enumerate}
\end{itemize}

Similarly, a criminal wouldn't have committed the crime had the universe not existed/had he never been born, but we don't consider those as causes of the crime.

\subsection{Irrelevant Background Conditions}\label{benchmark:irrelevant}
The assassin simultaneously shoots the victim and whispers.

\begin{itemize}
\item Whispering doesn't cause anything.
\begin{enumerate}
\item \begin{tikzpicture}
\node(Whisper) at (0,1) {\texttt{Whisper}};
\node(Shoot) at (2,1) {\texttt{Shoot}};
\node(Death) at (1,0) {\texttt{Death}};

\path [->, style=thick] (Shoot) edge (Death);
\end{tikzpicture}
\item $\F[\texttt{Death}](\w) = \w(\texttt{Shoot})$
\item Actual $\w_a = \dict{Whisper:1, Shoot:1, Death:1}$
\item $\C_{exo}=\dict{Whisper}$
\item Tweak $\w_{c|\C_{exo}}=\dict{Whisper:0}$
\item Tweaked $\w_c = \texttt{CFI}(\M, \w_a, \w_{c|\C_{exo}}) = \dict{Whisper:0, Shoot:1, Death:1}$
\item $\C_{endo} = \emptyset$
\item $\w_{a|\C_{exo}}=\dict{Whisper:1}$ causes $\w_{a|\C_{endo}}=\emptyset$.
\end{enumerate}

\item Shooting causes death.
\begin{enumerate}
\item Same $\M, \w_a$ as above.
\item $\C_{exo}=\dict{Shoot}$
\item Tweak $\w_{c|\C_{exo}}=\dict{Shoot:0}$
\item Tweaked $\w_c = \texttt{CFI}(\M, \w_a, \w_{c|\C_{exo}}) = \dict{Whisper:1, Shoot:0, Death:0}$
\item $\C_{endo} = \dict{Death}$
\item $\w_{a|\C_{exo}}=\dict{Shoot:1}$ causes $\w_{a|\C_{endo}}=\dict{Death:1}$.
\end{enumerate}

\end{itemize}

Similarly, Socrates drinks hemlock at dusk and dies. Hemlock causes death, but dusk doesn't cause anything \citep{achinstein1975causation}.

\subsection{Boulder and Hiker}\label{benchmark:boulder}
A hiker sees a boulder rolling towards him, so he dodges and survives. Had he not dodged, he wouldn't have survived \citep{hitchcock2001intransitivity}. This is an ostensible counterexample to the transitivity of causation (boulder causes dodge, dodge causes survival, but boulder doesn't cause survival). "Transitivity" is better understood as SFM-composition.

\begin{itemize}
\item Boulder causes dodge and doesn't cause survival.
\begin{enumerate}
\item \begin{tikzpicture}
\node(Boulder) at (0,2) {\texttt{Boulder}};
\node(Dodge) at (2,1) {\texttt{Dodge}};
\node(Survive) at (1,0) {\texttt{Survive}};

\path [->, style=thick] (Boulder) edge (Dodge);
\path [->, style=thick] (Boulder) edge (Survive);
\path [->, style=thick] (Dodge) edge (Survive);
\end{tikzpicture}
\item $\F[\texttt{Dodge}](\w) = \w(\texttt{Boulder})\\\F[\texttt{Survive}](\w)=\neg \w(\texttt{Boulder}) \lor \w(\texttt{Dodge})$
\item Actual $\w_a = \dict{Boulder:1, Dodge:1, Survive:1}$
\item $\C_{exo}=\dict{Boulder}$
\item Tweak $\w_{c|\C_{exo}}=\dict{Boulder:0}$
\item Tweaked $\w_c = \texttt{CFI}(\M, \w_a, \w_{c|\C_{exo}}) = \dict{Boulder:0, Dodge:0, Survive:1}$
\item $\C_{endo} = \dict{Dodge}$
\item $\w_{a|\C_{exo}}=\dict{Boulder:1}$ causes $\w_{a|\C_{endo}}=\dict{Dodge:1}$, but $\texttt{Survive} \notin \C_{endo}$.
\end{enumerate}

\item (Sub-SFM) Dodge causes survival.
\begin{enumerate}
\item \begin{tikzpicture}
\node(Boulder) at (0,1) {\texttt{Boulder}};
\node(Dodge) at (2,1) {\texttt{Dodge}};
\node(Survive) at (1,0) {\texttt{Survive}};

\path [->, style=thick] (Boulder) edge (Survive);
\path [->, style=thick] (Dodge) edge (Survive);
\end{tikzpicture}
\item $\F[\texttt{Survive}](\w)=\neg \w(\texttt{Boulder}) \lor \w(\texttt{Dodge})$
\item Actual $\w_a = \dict{Boulder:1, Dodge:1, Survive:1}$
\item $\C_{exo}=\dict{Dodge}$
\item Tweak $\w_{c|\C_{exo}}=\dict{Dodge:0}$
\item Tweaked $\w_c = \texttt{CFI}(\M, \w_a, \w_{c|\C_{exo}}) = \dict{Boulder:1, Dodge:0, Survive:0}$
\item $\C_{endo} = \dict{Survive}$
\item $\w_{a|\C_{exo}}=\dict{Dodge:1}$ causes $\w_{a|\C_{endo}}=\dict{Survive:1}$.
\end{enumerate}

\end{itemize}

\subsection{Bogus Prevention}\label{benchmark:bogus_prevent}
Taking birth control pills is the cause of a woman not getting pregnant, but not the cause of a man not getting pregnant, although "birth control prevents pregnancy" is always true \citep{salmon1971statistical}.

\begin{itemize}
\item Birth control causes a woman to be unable to get pregnant.
\begin{enumerate}
\item \begin{tikzpicture}
\node(IsWoman) at (0,1) {\texttt{IsWoman}};
\node(BirthControl) at (2,1) {\texttt{BirthControl}};
\node(CanPregnant) at (1,0) {\texttt{CanPregnant}};

\path [->, style=thick] (IsWoman) edge (CanPregnant);
\path [->, style=thick] (BirthControl) edge (CanPregnant);
\end{tikzpicture}
\item $\F[\texttt{CanPregnant}](\w)= \w(\texttt{IsWoman}) \land \neg \w(\texttt{BirthControl})$
\item Actual $\w_a = \dict{IsWoman:1, BirthControl:1, CanPregnant:0}$
\item $\C_{exo}=\dict{BirthControl}$
\item Tweak $\w_{c|\C_{exo}}=\dict{BirthControl:0}$
\item Tweaked $\w_c = \texttt{CFI}(\M, \w_a, \w_{c|\C_{exo}}) = \dict{IsWoman:1, BirthControl:0, CanPregnant:1}$
\item $\C_{endo} = \dict{CanPregnant}$
\item $\w_{a|\C_{exo}}=\dict{BirthControl:1}$ causes $\w_{a|\C_{endo}}=\dict{CanPregnant:0}$.
\end{enumerate}

\item Birth control doesn't cause anything for a man.
\begin{enumerate}
\item Same $\M$ as above.
\item Actual $\w_a = \dict{IsWoman:0, BirthControl:1, CanPregnant:0}$
\item $\C_{exo}=\dict{BirthControl}$
\item Tweak $\w_{c|\C_{exo}}=\dict{BirthControl:0}$
\item Tweaked $\w_c = \texttt{CFI}(\M, \w_a, \w_{c|\C_{exo}}) = \dict{IsWoman:0, BirthControl:0, CanPregnant:0}$
\item $\C_{endo} = \emptyset$
\item $\w_{a|\C_{exo}}=\dict{BirthControl:1}$ causes $\w_{a|\C_{endo}}=\emptyset$.
\end{enumerate}
\end{itemize}

\subsection{Backtracking Counterfactuals}
Subjunctive conditionals \citep{sep-counterfactuals} use forward inference, while indicative/backtracking/non-causal conditionals don't.

\begin{itemize}
\item (Subjunctive) If Shakespeare didn't write Hamlet, someone else would have.
\begin{enumerate}
\item \begin{tikzpicture}
\node(Boulder) at (0,2) {\texttt{Shakespeare}};
\node(Dodge) at (2,1) {\texttt{Writer2}};
\node(Survive) at (1,0) {\texttt{Hamlet}};

\path [->, style=thick] (Boulder) edge (Dodge);
\path [->, style=thick] (Boulder) edge (Survive);
\path [->, style=thick] (Dodge) edge (Survive);
\end{tikzpicture}
\item $\F[\texttt{Writer2}](\w) = \neg \w(\texttt{Shakespeare})\\\F[\texttt{Hamlet}](\w)=\w(\texttt{Shakespeare}) \lor \w(\texttt{Writer2})$
\item Actual $\w_{a}=\dict{Shakespeare:1, Writer2:0, Hamlet:1}$.
\item Query $\w_{exo}=\dict{Shakespeare:0}$
\item Queried $\w=\texttt{VFI}(\M, \w_{exo}) = \dict{Shakespeare:0, Writer2:1, Hamlet:1}$.
\end{enumerate}

\item (Indicative) If Shakespeare didn't write Hamlet, someone else did.

\begin{enumerate}
\item \begin{tikzpicture}
\node(Boulder) at (0,1) {\texttt{Shakespeare}};
\node(Dodge) at (2,1) {\texttt{Writer2}};
\node(Survive) at (1,0) {\texttt{Hamlet}};

\path [->, style=thick] (Boulder) edge (Survive);
\path [->, style=thick] (Dodge) edge (Survive);
\end{tikzpicture}
\item $\F[\texttt{Hamlet}](\w)= \w(\texttt{Shakespeare}) \lor \w(\texttt{Writer2})$
\item Given $\M$, the only $\w_{|\dict{Writer2}}$ compatible with $\dict{Shakespeare:0, Hamlet:1}$ is $\dict{Writer2:1}$.
\end{enumerate}
\end{itemize}

\subsection{Impossible Interventions}
Unlike $\texttt{Assassin}\to\texttt{Death}$, some functional dependencies contain all parents due to how the child is logically/conceptually/metaphysically defined, so it's impossible to add surgical interventions:
\begin{enumerate}
\item The string "hello" is functionally determined by its first character being "h", second character being "e", \dots 
\item The average height of students in the class is functionally determined by the individual height of each student.
\item Winning 2 out of 3 rounds is functionally determined by the result of each round.
\end{enumerate}
They're often known as supervenience (Section \ref{apply:supervenience}).

\section{Discussion}\label{discuss:section}
\subsection{Is SFM Insufficient?}
Some may argue that since SFM and functions can have non-causal interpretations, they are insufficient for defining causality. We respond with 3 counterarguments:
\begin{enumerate}
\item Some examples of insufficiency are results of misinterpretation. For example, student ID functionally determines all attributes (name, age, course registration, etc.) of a student in a database, but changing a student's ID won't cause changes in those attributes. This example doesn't hold because if we allow arbitrary changes to ID, there could be repeated IDs in different rows and ID no longer functionally determines other attributes.

\item Incorrect causal models (e.g. "cancer causes smoking") are still causal, unlike non-causal models (e.g. correlations, symmetric equations), which don't use functions at all. Since SFM comes from the conceptual analysis of what "causation" should mean, its definition cannot include all empirical facts about our world.

\item People often use causal interpretations to understand purely mathematical functions. When we say "changing the independent variable $x$ causes the dependent variable $y$ to change," we're using CFI.
\end{enumerate}

\subsection{A Case Against Actual Causality}\label{discuss:actual}
Delta compression and CFI are slightly useful heuristics that also fit our intuitions. However, the assumption that there exists a fixed set of "actual causes" is questionable in complex systems.

\begin{example}
A circuit has $n$ binary switches $\V_{exo}=\{X_1, X_2, \dots, X_n\}$ and 1 light bulb $\V_{endo}=\{Y\}$, where the $n$ switches functionally determine the light via a Boolean function $f: \{0, 1\}^n \to \{0, 1\}$.

Given the state of all switches and the light, which switches are the "actual causes" of the light being on/off?

There are $2^{2^n}$ different $n$-input $1$-output Boolean functions $f$. For each $f$, there are $2^n$ different possible worlds. Proponents of actual causality must accept one of the following:
\begin{enumerate}
\item Provide an algorithm that can identify actual causes in $2^{2^n}\times 2^n = 2^{(2^n+n)}$ situations. Case-by-case analyses won't scale.
\item Admit that contrast, default, and actual causality belong to an imperfect mental heuristic that would fail in complex systems.
\end{enumerate}

Graphical models don't help because there's only one Boolean function and we shouldn't insert hypothetical intermediate nodes. For $\X \subseteq \V_{exo}$, SFM can answer "is $\w_{0|\X}$ the cause of $\w_{0|endo}$?" by tweaking $\w_{0|\X}$ into $\w_{1|\X}$, inferring $\w_1 = \texttt{CFI}(\M, \w_0, \w_{1|\X})$, and contrasting $\w_1(Y)$ against $\w_0(Y)$ . But "a fixed set of actual causes" given $\M$ and $\w_0$ remains ill-defined.
People's intuitions may not give consistent answers and even if they do, such answers provide less information about $f$ than the input-output mappings of $f$ itself.
\end{example}

This example generalizes all "difficult causal scenarios" with binary variables and following features:
\begin{enumerate}
\item \textbf{Causal}: We can manipulate the switches to control the light.
\item \textbf{Deterministic}: It has no probabilistic component.
\item \textbf{Fully-specified}: Epistemological skepticism like "how do we know these laws-of-nature are true" doesn't apply.
\item \textbf{Clear input-output distinction}: There's no ambiguity in the direction of causal arrows.
\end{enumerate}
Therefore, proposing and solving a few cases wouldn't dissolve our objection.

Intuitions are often unreliable for modeling reality. Outside simple, everyday causal utterances, there's no real downside in abandoning actual causality. $\M$ itself perfectly describes the causal system and answers all "what if" inference queries like $\texttt{VFI}(\M, \w_{exo})$. Instead of listing "actual causes," a scientist should try modeling the functional determinations in a system.

Actual causality is almost only used in normative theories (e.g. responsibility, blame, proximate causes, ethics, law \citep{sep-causation-law}), which handle disagreements when everyone agrees on $\M$ (laws-of-nature) and $\w_a$ (what actually happens). Working with full SFMs instead of actual causality allows us to consider strictly more normative theories.

\section{Probabilistic SFM}\label{prob:section}
To incorporate probability theory, we don't need to modify the definition of SFM. We just extend domains $\D[u]$ to include random variables and modify structural functions $\F[u]$ accordingly. Probability isn't required for most thought experiments on causality, but we'll provide a rigorous mathematical foundation for probabilistic SFM. Notably, \textit{nodes and random variables are not the same}. We avoid calling nodes "variables" precisely for this reason.

\subsection{Probabilistic Extension}
Think of a node $u$ as a name or index. Its value $\w(u)$ can be a random variable: $\w(u) = X$. A \textit{random variable} $X: \Omega \to \mathbb{R}$ maps an outcome $\omega$ (in sample space $\Omega$) to a real number $X(\omega) \in \mathbb{R}$. "$X=x$" is a shorthand for event $\{\omega \in \Omega | X(\omega) = x\}$, so we can compute its probability $\Pr[X=x]$. $X=x$ isn't an actual equation because $X$ is a function and $x$ is a real number. Again, "node $u$ has value $X$; $X$ is a random variable" and "random variable $X$ takes on value $x$; $x$ is a real number" are different things.

Most basically, \textit{functions of random variables} are actually function compositions \citep{blitzstein2015introduction}. Consider real-valued function $f(x)=2x$ and random variable $X: \Omega \to \mathbb{R}$. We want a new random variable $Y$ that always "takes twice the value" of $X$:
\begin{align*}
Y(\omega) &= 2X(\omega)\\
&= f(X(\omega))\\
&= (f \circ X)(\omega)\\
Y &= f \circ X
\end{align*}
The expression "$Y = f(X)$" is wrong by a rigorous standard, because random variable $X$ isn't in $f$'s domain of real numbers.

Formally, the \textit{probabilistic extension} of SFM $\M_{old} = (\V, \E, \D_{old}, \F_{old})$ returns a new SFM $\M_{new}=(\V, \E, \D_{new}, \F_{new})$:
\begin{enumerate}
\item Specify the sample space $\Omega$.
\item Specify a set of nodes $\mathcal{S} \subseteq \V$ for probabilistic extension. Other nodes $\V \setminus \mathcal{S}$ still don't have random variables in their domains.

The set $\mathcal{S}$ must be downward closed: if $u \in \mathcal{S}$, then every descendant of $u$ must also be in $\mathcal{S}$, as if randomness is "contagious" and flows down the computational graph.

For random variables to be well-defined, we also require $\D_{old}[u]$ (e.g. real numbers, vectors, graphs, functions) to be measurable for all $u \in \mathcal{S}$.
\item Let $RV[u]$ denote the set of random variables $\Omega \to \D_{old}[u]$. If $u \in \mathcal{S}$, $\D_{new}[u] = \D_{old}[u] \cup RV[u]$; otherwise, $\D_{new}[u] = \D_{old}[u]$.
\item Recall that a random variable $X$ has realization $X(\omega)$ given outcome $\omega \in \Omega$.

For nodes $\X \subseteq \V$, we define the \textit{realization} of assignment $\w_{|\X}$ given outcome $\omega$ as:
$$\texttt{Realize}(\w_{|\X}, \omega) = \{u : \w_{|\X}(u) \textbf{ if } \w_{|\X}(u) \in \D_{old}[u] \textbf{ else } (\w_{|\X}(u))(\omega)\}_{u \in \X}$$
By realizing every random variable with $\omega$ and keeping other values as is, $\texttt{Realize}(\w_{|\X}, \omega)$ is an assignment of both $\M_{old}$ and $\M_{new}$ because $\forall u \in \V: \texttt{Realize}(\w_{|\X}, \omega)(u) \in \D_{old}[u]$.

\item For $\F_{new}$ and endo-node $u \in \V_{endo}$:
\begin{itemize}
\item If $\forall p \in \Pa(u): \w_{|\Pa(u)}(p) \in \D_{old}[p]$ (no parent value is a random variable), then $\F_{new}[u](\w_{|\Pa(u)}) = \F_{old}[u](\w_{|\Pa(u)}) \in \D_{old}[u]$.
\item Otherwise (at least one parent value is a random variable), $\F_{new}[u](\w_{|\Pa(u)})$ is a random variable $\Omega \to \D_{old}[u]$ in $RV[u]$. For outcome $\omega \in \Omega$, we compute $\F_{new}[u](\w_{|\Pa(u)})(\omega) = \F_{old}[u](\texttt{Realize}(\w_{|\Pa(u)}, \omega))$.
\end{itemize}
\end{enumerate}

Some corollaries about probabilistic extension:
\begin{enumerate}
\item For every $u \in \V$, $\F_{new}[u] \supseteq \F_{old}[u]$ because both the domain and the codomain are strictly extended, hence the name "probabilistic extension."
\item If $\w$ satisfies $\mathcal{M}_{old}$, then it also satisfies $\mathcal{M}_{new}$. For $\X \subseteq \V$, if $\w_{|\X}$ is permitted by $\mathcal{M}_{old}$, then it's also permitted by $\mathcal{M}_{new}$.
\item If $\w$ satisfies $\M_{new}$, then its realization $\texttt{Realize}(\w, \omega)$ also satisfies $\M_{old}$ for every $\omega \in \Omega$.

Intuitively, random variables express uncertainty about which realization is actual. Each realization is a possible world in $\M_{old}$. Probability merely adds "weights" to these possible worlds, so causal mechanisms are deterministic and true in every realization. This is unlike Bayesian networks, where the mechanisms are inherently random.
\end{enumerate}

We can now formalize "correlation doesn't imply causation" using SFM: The same "observational distribution" $\w_{|\X}$ (where some nodes have random variables as values; $\X \subseteq \V$) might be permitted by different SFMs $\M_1 \ne \M_2$ with $R_{\M_1} \ne R_{\M_2}$, which cannot be treated as equal.

\subsection{Bayesian Networks}
With probabilistic extension, SFM generalizes Bayesian networks, which also use directed acyclic graphs. In a Bayesian network \citep{koller2009pgm}, each node corresponds to a random variable, each exo-node stores a marginal distribution, and each endo-node stores a conditional distribution given the node's parents.

Bayesian networks require the exogenous random variables to be probabilistically independent, while we don't enforce that requirement (you may enforce it explicitly).

It's difficult for Bayesian networks to represent SFM. To encode functional determination (right-uniqueness), the conditional distributions must be degenerate. When input distributions cannot be assumed (e.g. light switch doesn't affect TV) and we only have specific input values, the marginal distributions are degenerate too. This sacrifices nearly all expressiveness of a Bayesian network.

Any Bayesian network can be expressed by an SFM. We'll use \textit{probability integral transform} (PIT) to represent conditional probability distributions with deterministic functions:
\begin{enumerate}
\item For simplicity, consider real-valued ($\Omega \to \mathbb{R}$) random variables $Y, X_1, X_2, \dots, X_n$ and conditional distribution $\Pr[Y|X_1=x_1, X_2=x_2, \dots, X_n=x_n]$.
\item Create continuous uniform random variable $U \sim \mathrm{Unif}(0, 1)$ in range $[0, 1]$, independent from all $X_i$.
\item Let $F_{Y|X_i=x_i}(y): \mathbb{R} \to [0, 1]$ be the conditional CDF (cumulative distribution function) of $Y$, such that it has an inverse $F_{Y|X_i=x_i}^{-1}: [0, 1] \to \mathbb{R}$.
\item By PIT \citep{blitzstein2015introduction}, random variable $F_{Y|X_i=x_i}^{-1} \circ U$ has exactly the same CDF as $F_{Y|X_i=x_i}$.
\item We've created a deterministic function $f(x_1, x_2, \dots, x_n, U)$ that returns a random variable $F_{Y|X_i=x_i}^{-1} \circ U$, given real-valued $x_i$ and random variable $U$.
\end{enumerate}

Essentially, we can enforce "all mechanisms are deterministic" without sacrificing expressiveness. The inherent randomness of a mechanism is "injected" by an unobservable "noise" parent whose value is a random variable. This practice is fairly common:
\begin{enumerate}
\item To sample values from $\mathcal{N}(\mu, \sigma^2)$, the \textit{reparameterization trick} uses deterministic function $f(\mu, \sigma, \epsilon)=\mu + \sigma \times \epsilon$, where $\epsilon$ is sampled from an auxiliary "noise" distribution $\mathcal{N}(0, 1)$ \citep{kingma2013vae}.
\item \textit{Additive noise model} $Y = f_Y(X) + N_Y$ has random variables $X, Y, N_Y$, deterministic function $f_Y$, and additive noise $N_Y \perp X$ \citep{peters2017elements}.
\item Randomness in computer programs often comes from built-in random number generators, while the main program is deterministic.
\end{enumerate}

\section{Comparison}\label{compare}
\subsection{Symmetric Laws and Causal Eliminativism}
In a \textit{symmetric equation} of $n$ variables, the values of any $n-1$ variables functionally determine the value of the 1 remaining variable. Newton's second law of motion $F=ma$, the ideal gas law $pV=nRT$, and Ohm's law $V=IR$ are symmetric laws. This differs from the non-injective asymmetry of functions.
We usually view symmetric equations as non-causal, because 1 equation is simpler than $n$ functional determinations.

As a causal eliminativist, \citet{russell1912notion} argues that causality doesn't appear in physics and should be removed from philosophy altogether. However, we've shown that functions and SFM are useful.
We only consider Russell's attack on the functional theory of causation, since we don't agree with other definitions either.
\begin{enumerate}
\item Plurality of causes: Multiple alternative causes like gunshot, arsenic, etc. can map to the same effect - the person's death. (Some functions are non-injective.)
\item Plurality of effects: The effect can be defined as the whole state of the world, which contains many variables. (The "cause" node has multiple descendants.)
\end{enumerate}
Russell incorrectly dismisses functional asymmetry (non-injectiveness) as "illusory," as if the plurality of effects makes both sides symmetric. But these "pluralities" aren't the same. Non-injectiveness cannot be eliminated without changing the function itself.

SFM also addresses other eliminativist challenges on causality \citep{sep-causation-physics}. SFM-causality isn't vague; actual causality, while not appearing in physics, is a slightly useful heuristic that can be abandoned when necessary; probabilistic extension handles inherently random mechanisms; functions are compatible with different theories of space (e.g. action at a distance) and time (Section \ref{apply:time}).

\subsection{Hume, Regularity, and Problem of Induction}\label{compare:hume}
\citet{hume1896treatise} challenges causality as follows. We say "striking a match causes it to ignite." But empirically, we only observe constant conjunctions of events like "match struck" followed by "match igniting." We don't directly observe the link/connection between cause and effect. So any causal "law" is an inductive generalization from particular events, with no necessary guarantee to remain true in the future \citep{sep-hume}.

Hume conflates 2 distinct problems:
\begin{enumerate}
\item \textbf{Conceptual}: What's the definition of causality?

Claiming "causality is just a special kind of regularity" is true but non-reductive: What is that "special kind"? All inductive models (e.g. correlations, symmetric equations) model "regular connections," but only functional determination captures our causal intuition.

Besides relying on unspecified physical/metaphysical models (e.g. time, space, contiguity), regularity conditions like "all events of type $X$ are followed by an event of type $Y$" \citep{sep-causation-regularity} cannot produce causal utterances in background condition cases (Section \ref{benchmark:background}), which are deterministic and fully-specified.

\item \textbf{Epistemological}: How to ensure the correctness of a causal model?

Non-probabilistic SFM (due to right-uniqueness) and symmetric laws make exceptionless claims about reality, while correlation doesn't. Perhaps that's why Hume attacks causality first. However, all inductive generalizations from empirical data are equally susceptible to the Problem of Induction (PoI) \citep{sep-induction-problem}. Causality isn't somehow "more unreliable" than symmetric laws or correlations.

We formulate PoI as follows. Consider a normal world $W_1(t)$ and a piecewise world $W_2(t)$. $W_2(t)$ is exactly the same as $W_1(t)$ for all time $t$ before $t_0$, but is drastically different after $t_0$. Given a world $W$ and all its information before $t_0$, there's no way of distinguishing whether $W$ is $W_1$ or $W_2$.

By enumerating different ways of $W_2$ being "drastically different", such as "the world exploding after $t_0$" or "the gravitational constant doubling after $t_0$", we can construct worlds where symmetric laws and correlations break down under PoI. Therefore, PoI isn't an attack against causality alone. Similarly, a conceptual definition of causation won't solve PoI.
\end{enumerate}

\subsection{Logic and Counterfactuals}\label{compare:logic}
In retrospect, "if-then" material conditionals cannot replace causality because it violates right-uniqueness: both $\{p:0, q:0\}$ and $\{p:0, q:1\}$ satisfy $p \Rightarrow q$. It allows vacuously true propositions like "if I don't eat anything today, then I am a billionaire," which feels wrong causally/counterfactually. By adding the laws-of-nature ($\M$) to the antecedents, we can perform rigorous deduction $\w=\texttt{VFI}(\M, \w_{exo})$ without sacrificing causal intuitions. The underlying causal formula is $q=f(p, \dots)$ instead of $p \Rightarrow q$, though functions and background conditions are often omitted in causal utterances.

Like the \textit{but-for} test, many counterfactual definitions of causation are variations of "if $x$, then $y$; if not-$x$, then not-$y$" \citep{sep-causation-counterfactual}. They're usually imperfect because they don't have the full expressiveness of functions. For example, \citet{mackie1965causes}'s INUS condition is equivalent to disjunctive normal form \citep{kim1971causes}, which any Boolean function can be converted to, so it's just a circuitous way of stating "causality is functions."

\citet{Lewis1973-LEWC-2} defines counterfactual conditional "if not-$x$, then not-$y$" as "in the closest possible world with not-$x$, there's not-$y$." However, without defining a distance metric and an algorithm to find the closest possible world, this definition cannot even describe a deterministic, fully-specified causal system. Using actual-tweaked contrast, SFM unambiguously computes the contrastive world as $\w_c=\texttt{CFI}(\M, \w_a, \w_{c|\C_{exo}})$.

\subsection{Intervention and SCM}\label{compare:scm}
The definition of SCM \citep{pearl2009causality} relies on intervention, a causal concept, so it's often criticized for being circular and non-reductive. We develop SFM as an equally-expressive reformulation of SCM that only relies on functions, thus eliminating circularity and providing a philosophical foundation for SCM. The generality of functions also avoids anthropocentric objections that manipulation requires human agency \citep{sep-causation-mani}.

Although SCM's surgical intervention $do(Y=y)$ is generalized by sub-SFM, we can also define it as a parent of $Y$, making intervention just a type of functional determination.

Given $\Pa(Y) = \{X_1, X_2, \dots, X_n, DoY\}$, $DoY$ is a \textit{surgical intervention} on $Y$ when:
\begin{enumerate}
\item $\D[DoY] = \D[Y] \cup \{\texttt{None}\}$. $\texttt{None} \notin \D[Y]$ means "no intervention," like in option types and nullable types.
\item There exists an "ordinary mechanism" function $g: \prod_{i=1}^{n} \D[X_i] \to \D[Y]$, such that
$$\F[Y](\w_{|\Pa(Y)}) = \begin{cases}
g(\w_{|\{X_1, X_2, \dots, X_n\}}) \textbf{ if } \w_{|\Pa(Y)}(DoY) = \texttt{None},\\
\w_{|\Pa(Y)}(DoY) \textbf{ otherwise}
\end{cases}
$$
\end{enumerate}

"Conditionally overriding an ordinary mechanism" is the key intuition behind interventions. For example,  barometer reading is ordinarily determined by atmospheric pressure, but it can also be manipulated by human intervention. SFM can also express more complicated interventions, like when intervention has a failure probability or when only some intervention options are possible for humans.

\section{Philosophical Applications}\label{apply:section}
Many philosophical discussions take "causation" as given without mathematically defining what it is, so our functional definition of causality may help clarify some downstream concepts.

\subsection{Desires for SFM Learning}
Several alleged "metaphysical doctrines" about causality can now be seen as epistemological desires for learning new SFMs:
\begin{enumerate}
\item The Principle of Sufficient Reason (PSR): "Everything has a cause" or "anything is an effect caused by earlier events" \citep{sep-sufficient-reason}.

PSR desires to add parents to exo-nodes that "have no causes" in old models.

\item The Eleatic Principle (EP): For something to "exist" in an ontology, it must be able to cause changes in other things \citep{colyvan1998eleatic}.

EP desires to add descendants to sink nodes that "affect nothing" in old models.

\item Causal Nexus (CN): "Any causal relation requires a nexus, some interface by means of which cause and effect are connected" \citep{sep-mental-causation}.

CN desires to insert intermediate nodes between old parent-child edges.
\end{enumerate}
Strictly speaking, these desires are not satisfiable if we only allow finite acyclic SFM (Appendix \ref{trilemma}), but they do encourage us to learn bigger SFMs to model the world.

\subsection{The Uncaused}\label{apply:uncaused}
Since exo-nodes can never appear in the "effect" part of causal utterances, we define node $u$ is \textit{uncaused} relative to $\M$ iff $u \in \V_{exo}$. Being uncaused/exogenous is not a metaphysical fact, but a modeling choice we make: We don't \textit{want} to model $u$ as being determined by a mechanism and other nodes in $\M$.

If $\M_{uni}$ is the SFM of the full world, we often only use some sub-SFM $\M_{sub}$ for specific tasks.
Because a node can be uncaused (exo-node) in one sub-SFM and caused (endo-node) in another, regarding "uncaused" as a node's metaphysical property without specifying $\M_{sub}$ is ill-defined. This is the source of many confusions.

For something with no causal parent anywhere, we say $u$ is \textit{strongly-uncaused} iff $u$ isn't an endo-node in any sub-SFM $\M_{sub}$ (i.e. it's an exo-node in $\M_{uni}$).

\subsection{Free Will}\label{apply:free}
Free will loosely describes an agent's ability to "freely" choose between different possible actions \citep{sep-freewill}. We often face seemingly conflicting intuitions:
\begin{enumerate}
\item People have free will.
\item The world's past and laws-of-nature functionally determine the world's future, making people's decisions unfree.
\end{enumerate} 

If we accept that something is \textit{free} if it's "uncaused or not deterministically caused" \citep{sep-causation-mani}, then SFM offers a mathematical definition of freedom that resolves this conflict:
\begin{enumerate}
\item Node $u$ is \textit{free} relative to $\M$ iff $u$ is exogenous in $\M$.
\item Node $u$ is \textit{unfree} relative to $\M$ iff $u$ is endogenous in $\M$.
\item Node $u$ is \textit{strongly-free} iff $u$ is exogenous in every $\M$ of interest that contains $u$.
\item Node $u$ is \textit{strongly-unfree} iff $u$ is endogenous in every $\M$ of interest that contains $u$.
\end{enumerate}

Whether an action is free depends on the model of interest.
When actions have consequences, we want to model the utility function $Q(s, a)$ for taking action $a \in A$ at state $s \in S$. This makes action free relative to $Q(s, a)$, so any action with consequences is not strongly-unfree. Meanwhile, the best action $a^*=\pi(s, Q)=\arg_{a \in A}\max Q(s, a)$ is determined/unfree relative to $\pi(s, Q)$. But for discrete $S, A$ without additional assumptions, finding the best action requires computing $Q(s, a)$ for all $a \in A$, so modeling $a$ as a "free" input is inevitable and useful: The agent evaluates the utility of each action before taking the best action.

Besides reinforcement learning \citep{sutton2018reinforcement}, this best-action-selection framework also applies to minimax search \citep{russell2010artificial} and decision-making in general. Although we don't define causality using agency like \citet{menzies1993causation}, we suggest that modeling "actions functionally determine consequences" could be an origin of human causal intuitions.

Generally, the freedom/arbitrariness/uncertainty of function inputs is closer to the universal quantifier "for all/any." It's not determined because we don't model it as another function's output; it's not random because we cannot reasonably specify its marginal distribution and even if we do, the distribution isn't helpful for the downstream task.

\begin{enumerate}
\item The light switch is free to vary, while the light is determined.
\item To maximize $f(x)$, we freely vary $x$ and record the maximum $f(x)$.
\item We freely change causes $\w_{1|\C_{exo}}$ and infer effects $\w_{1|\C_{endo}} \subseteq \texttt{CFI}(\M, \w_0, \w_{1|\C_{exo}})$.
\item A sorting algorithm works for an arbitrary input list.
\end{enumerate}

\subsection{Causal Explanation}\label{apply:explanation}
We use \textit{explanans} to explain \textit{explanandum}. A \textit{causal explanation} uses causes (and underlying mechanisms/laws-of-nature) to explain effects. There are also non-causal explanations that appeal to symmetric equations, correlation, or backtracking (using effects to explain causes).

With SFM, causal explanations become a subset of Deductive-Nomological (DN) explanations, where (1) explanans contains general laws and particular conditions; (2) explanandum is entailed by explanans \citep{sep-scientific-explanation}.

Using $\w_{a|\C_{exo}}$ to explain $\w_{a|\C_{endo}}$, we use general laws $\M$ and particular conditions $\w_c, \w_{a|exo}$; entailment comes from $\w_{a|\C_{endo}}\subseteq \w_{a}=\texttt{CFI}(\M, \w_c, \w_{a|\C_{exo}})$.
In practice, $\M$ can be learned from empirical data (inductive); some nodes' values can be random variables (probabilistic).

SFM solves many alleged counterexamples where DN model appears insufficient for defining explanation:
\begin{enumerate}
\item In the symmetric equation involving shadow length, the Sun's position, and flagpole height, why is shadow the explanandum? Because we prefer causal explanations over non-causal explanations and shadow should be modeled as a child node (Section \ref{learn:flagpole}).
\item Why do people omit irrelevant background conditions in explanations? Because we use delta compression in causal utterances (Section \ref{benchmark:bogus_prevent}, \ref{benchmark:irrelevant}).
\end{enumerate}
The asymmetry of causal explanation comes from the asymmetry of causality, which comes from functions being right-unique and often non-injective.

\subsection{Disposition}\label{apply:disposition}
Glass is fragile because it has a \textit{disposition} to shatter. Dispositions like fragility resemble properties of objects, but they describe possible (not necessarily actual) behaviors under certain conditions: Glass may not actually shatter \citep{sep-dispositions}. We analyze dispositions with functions.

For a deterministic and fully-specified example, minerals higher on Mohs hardness scale (e.g. diamond) will scratch softer minerals (e.g. talc). Let function $f(m_1, m_2)$ take in 2 minerals and return the mineral that gets scratched, so $\texttt{Talc} = f(\texttt{Diamond}, \texttt{Talc})$. Talc has the disposition to be scratched because $\forall m: \texttt{Talc} = f(m, \texttt{Talc})$; diamond has the "power" to scratch because $\forall m: m = f(m, \texttt{Diamond})$.

Therefore, dispositions are properties of a downstream function $f$, but people colloquially associate them with input nodes (scratch-hardness of minerals) or input values (scratch-hardness of diamond).

\subsection{Supervenience}\label{apply:supervenience}
"$Y$ \textit{supervenes} on $X$" is equivalent to "$Y$ functionally depends on $X$," because the formal definition of supervenience ("there cannot be an $Y$-difference without a $X$-difference" \citep{sep-supervenience}) is the same as right-uniqueness: Consider team $R$,
\begin{align*}
&\ \quad \neg ([(x, y_1)\in R] \land [(x, y_2) \in R] \land [y_1 \ne y_2])\\
&= \neg ([(x, y_1)\in R] \land [(x, y_2) \in R]) \lor \neg [y_1 \ne y_2]\\
&= \neg ([(x, y_1)\in R] \land [(x, y_2) \in R]) \lor [y_1 = y_2]\\
&= ([(x, y_1)\in R] \land [(x, y_2) \in R]) \Rightarrow [y_1 = y_2]
\end{align*}

\subsection{Mental Causation}\label{apply:mental}
Can mental kinds (property, state, event) cause physical kinds? Mental causation faces 2 conflicting intuitions:
\begin{enumerate}
\item It's common in everyday experiences: I want to raise my hand (mental state), so I raise my hand (body state).
\item The \textit{Exclusion Problem}: Physical effects like body movements are already determined by physical causes like brain activities, so there's no room for a mental cause, which is also sufficient for the physical effect  \citep{sep-mental-causation}.
\end{enumerate}

The Exclusion Problem arises whenever a functional determination entailed by $(\V, \E, \D, \F)$ cannot be deduced from the graph structure $\G=(\V, \E)$ (and Armstrong's Axioms) alone. People feel uneasy because they cannot find such dependency as a path in $\G$.

\begin{enumerate}
\item (Assumption) On the lowest physical level, brain state functionally determines body state: $\texttt{BrainState} \xrightarrow{f_1} \texttt{BodyState}$.
\item (Assumption) Mental state supervenes on brain state: $\texttt{BrainState} \xrightarrow{f_2} \texttt{MentalState}$. Mental state is an abstract/aggregate description of physical brain state.

\textit{Multiple realizability} (a single mental kind can be realized by many distinct physical kinds) \citep{sep-multiple-realizability} is true when $f_2$ is non-injective.

It's a coincidence that \textit{functionalism} (name unrelated to mathematical functions) uses causality to define mental states \citep{sep-functionalism} and we reduce causality to functions.
\item So we have an SFM $\M$ with graph $\texttt{BodyState} \leftarrow \texttt{BrainState} \to \texttt{MentalState}$ and functions $\F = \{\texttt{BodyState}: f_1, \texttt{MentalState}: f_2\}$.
\item (Fact) There exists a function $f_3$ such that $\texttt{MentalState} \xrightarrow{f_3} \texttt{BodyState}$ is true in every $\w \in R_{\M}$. Mental state does functionally determine body state.

$\texttt{MentalState} \xrightarrow{f_3} \texttt{BodyState}$ cannot be deduced from $\G$ alone. It's entailed by the specific functional mappings $f_1, f_2$ (and $(\V, \E, \D)$). Although it cannot appear as a path in $\G$, we see no reason to dismiss it as "excluded." We may use SFM-intersection to explicitly/graphically encode this functional dependency, although $R_{\M}$ is entailed by a single SFM $\M$ (SFM-intersection-proper isn't required).
\end{enumerate}

The Exclusion Problem appears in any system with hierarchical levels of abstraction, since supervenience is just functional dependency. In fully-specified and deterministic computers, what causes a video to play on screen, the low-level chip activities or the high-level video-player program? The same reasoning applies. The only empirical question is whether higher-level functional determinations like $f_3$ are true. If not, we simply say the abstraction is broken.

\subsection{Time}\label{apply:time}
SFM doesn't endorse any particular theory of time, but we can define $T: \V \to \mathbb{R}$ that maps each node $u$ to a real-valued timestamp $T(u)$. If $\forall (u, v) \in \E: T(u) \le T(v)$, then causes always temporally precede their effects. But without additional assumptions, $T$ might as well violate this condition.

\textit{Backward causation} occurs when an effect temporally precedes its cause \citep{sep-causation-backwards}. If most SFM edges $(u, v) \in \E$ still point from past to future ($T(u) \le T(v)$), a backward edge can create cycles, resulting in PULO or actually unsatisfiable laws. That's why people intuitively dislike backward causation. But if the specific SFM is satisfiable or satisfied by empirical data, we cannot dismiss it \textit{a priori}.

Are there fundamental properties of our physical world that make causal and temporal orders agree? Could it be the asymmetry of thermodynamics, radiation \citep{sep-causation-physics}, or our mental habit of "actions determining consequences" (Section \ref{apply:free})? Further research is required.

In a causal feedback loop $A \to A$, node $A$ influences its own next state. With discrete time, we can unroll it to an acyclic time-indexed causal chain $A(0) \xrightarrow{\F[A(1)]} A(1) \xrightarrow{\F[A(2)]} A(2) \xrightarrow{\F[A(3)]} \dots$, which may be countably infinite. When every $\F[A(t)]$ is invertible, the system has time-symmetry and another equivalent SFM $A(0) \xleftarrow{\F[A(1)]^{-1}} A(1) \xleftarrow{\F[A(2)]^{-1}} A(2) \xleftarrow{\F[A(3)]^{-1}} \dots$.

Decreasing the interval between 2 consecutive timestamps towards the infinitesimal, we eventually get an uncountable number of nodes and cannot properly define an edge, because there are no 2 "consecutive" real numbers. In this case, it would make more sense for $A(t)$ to determine its instantaneous rate of change $\frac{d}{dt} A(t)$, like the exponential/logistic growth rate of bacteria population size and the predator-prey dynamics in Lotka-Volterra equations. SFM-intersection-proper can represent autonomous differential equations, which include causal loop diagrams \citep{haraldsson2004introduction}. However, differential equation in general are better tools for modeling continuous-time causality.

\section{Conclusion}\label{conclusion}
After our conceptual analysis that reduces causality to functions, there should be nothing mysterious about the definition of causality.
Using forward inference, contrast, and delta compression, Structural Functional Model (SFM) correctly produces intuitive causal utterances.
We've also supported intuitive practices from an algorithmic perspective: contrast saves space and time; finite acyclic SFM is required for guaranteed satisfiability (at the cost of expressiveness).
Distinct from but compatible with probability theory, "causality as functions" allows for interesting downstream applications.

\bibliography{causality}

\begin{appendices}
\section{Mathematics Review}\label{math_foundations}
Under ZFC set theory, a \textit{set} is roughly an unordered collection of distinct elements. The binary \textit{Cartesian product} between two sets $X$ and $Y$ is $X \times Y = \{(x, y)|x \in X \land y \in Y\}$.
A binary \textit{relation} $R$ over $X$ and $Y$ is $R \subseteq X \times Y$.
A relation $R$ may have properties:
\begin{enumerate}
\item Left-total: $\forall x \in X \exists y \in Y: (x,y) \in R$
\item Right-total: $\forall y \in Y \exists x \in X: (x,y) \in R$
\item Left-unique: $\forall x_1 \in X, x_2 \in X, y \in Y: ((x_1,y)\in R) \land ((x_2,y) \in R) \Rightarrow x_1=x_2$
\item \textit{Right-unique}: $\forall x \in X, y_1 \in Y, y_2 \in Y: ((x,y_1)\in R) \land ((x,y_2)\in R) \Rightarrow y_1=y_2$
\item \textit{Function} (total function): left-total and right-unique.
\item Partial function: right-unique.
\item \textit{Injective function}: left-unique function.
\item Surjective function: right-total function.
\item Bijective function: injective and surjective function.
\end{enumerate}
Because of right-uniqueness, a function can be written as $f: X \to Y$ such that $f(x) \in Y$ is unique for every $x \in X$. For functions $f: X \to Y$ and $g: S \to Y$ satisfying $S \subseteq X$, if $\forall x \in S: f(x)=g(x)$, we say $g$ is a \textit{restriction} of $f$ and $f$ is an \textit{extension} of $g$ (or $f$ extends $g$). Because functions are relations, we write $g \subseteq f$ or $g = f_{|S}$.

An \textit{indexed collection of sets }is a 3-tuple $(I, \mathcal{A}, A)$ written as $\{A_i\}_{i\in I}$, where $I$ is the index set, $\mathcal{A}$ is a collection of sets, and $A$ is a function $A: I \to \mathcal{A}$. Every $A_i = A(i) \in \mathcal{A}$ is a set. Now we can define Cartesian product over any (possibly infinite-sized) indexed collection of sets: $\prod_{i\in I}A_{i}$ is the set of all functions $f: I \to \bigcup_{i\in I}A_{i}$ such that $\forall i\in I: f(i) \in A_{i}$.
Similarly, a relation over an indexed collection of sets is a subset of its Cartesian product.

A \textit{directed graph} is an ordered pair $\G = (\V, \E)$, where $\V$ is a set of nodes and $\E \subseteq \V \times \V$ is a set of directed edges. A \textit{directed edge} is an ordered pair $(u, v)$ such that $u \in \V$ and $v \in\V$.
\begin{itemize}
\item If $(u, v) \in \E$, $u$ is a \textit{parent} of $v$ and $v$ is a \textit{child} of $u$.
\item $\Pa(u)$ denotes the set of parents of $u$; $\mathrm{Ch}(u)$ denotes the set of children of $u$.
\item The \textit{indegree} of $u$ is the number of its parents ($\mathrm{deg}^{-}(u) = |\Pa(u)|$); the \textit{outdegree} of $u$ is the number of its children ($\mathrm{deg}^{+}(u) = |\mathrm{Ch}(u)|$); the \textit{degree} of $u$ is the sum of its indegree and outdegree ($\mathrm{deg}(u)=\mathrm{deg}^{-}(u)+\mathrm{deg}^{+}(u)$).
\item A \textit{root} node $u$ has indegree $\mathrm{deg}^{-}(u) = 0$. A \textit{sink} node $u$ has outdegree $\mathrm{deg}^{+}(u) = 0$.
\item A \textit{path} is a sequence of nodes $v_1, v_2, \dots, v_n$ such that $(v_i, v_{i+1}) \in \E$ for any $i \in \{1, 2, \dots, n-1\}$; a path is a \textit{cycle} if $v_1 = v_n$.
\item Node $u$ is an \textit{ancestor} of node $v$ ($u \in \mathrm{An}(v)$) if there's a path from $u$ to $v$. Otherwise, $u$ is a \textit{non-ancestor} of $v$.
\item Node $u$ is a \textit{descendant} of node $v$ ($u \in \mathrm{De}(v)$) if there's a path from $v$ to $u$. Otherwise, $u$ is a \textit{non-descendant} of $v$.
\item Under our convention, a node is the ancestor/descendant of itself.
\end{itemize}

\section{Generalized M\"unchhaussen Trilemma}\label{trilemma}
We formalize a theorem that generalizes M\"unchhaussen Trilemma \citep{sep-formal-epistemology} in epistemology:

\begin{theorem}[\textbf{Generalized M\"unchhaussen Trilemma (GMT)}]
Any directed graph $G = (V, E)$ contains at least one of the following:
\begin{itemize}
\item A root.
\item A cycle.
\item An infinite regress: An infinite path of distinct nodes $(\dots, u_2, u_1, u_0)$ ending at $u_0$, such that for any integer $i \ge 1$, there exists $u_i \in V$ satisfying $(u_i, u_{i-1}) \in E$ and $u_i \notin \{u_j\}_{j=0}^{i-1}$
\end{itemize}
\end{theorem}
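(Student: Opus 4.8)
The plan is to prove the equivalent one-line reduction: if $G=(V,E)$ has no root, then it contains a cycle or an infinite regress. (I take $V\neq\emptyset$, which the statement implicitly needs.) ``No root'' means every node has indegree $\ge 1$, i.e.\ every node has at least one parent, and this is the only hypothesis I will use.

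First I would construct, by recursion on $i$, a backward walk $u_0,u_1,u_2,\dots$ with $(u_{i+1},u_i)\in E$. Start from an arbitrary $u_0\in V$. Having chosen pairwise distinct $u_0,\dots,u_i$, note that $u_i$ has a parent; \emph{if some parent of $u_i$ avoids $\{u_0,\dots,u_i\}$}, pick $u_{i+1}$ to be such a parent, which keeps the sequence distinct. If this choice is available at every stage, the recursion runs forever (legitimate in ZFC by the Axiom of Dependent Choice, since an infinite sequence of genuine choices is made) and yields an infinite sequence of pairwise distinct nodes with $(u_{i+1},u_i)\in E$ for all $i$. Read in reverse, $(\dots,u_2,u_1,u_0)$ is exactly an infinite regress in the sense of the statement: $(u_i,u_{i-1})\in E$ and $u_i\notin\{u_j\}_{j=0}^{i-1}$ for every $i\ge 1$, by construction.

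Otherwise the recursion halts at some stage $i$: the node $u_i$ has a parent, but every parent of $u_i$ lies in $\{u_0,\dots,u_i\}$. Fix such a parent $u_j$ with $j\le i$. If $j=i$, then $(u_i,u_i)\in E$ is already a cycle (a closed path of length one, under the paper's definition). If $j<i$, the edges $(u_{k+1},u_k)$ for $j\le k<i$ form a directed path $u_i,u_{i-1},\dots,u_j$, and appending the edge $(u_j,u_i)$ produces the closed walk $u_j,u_i,u_{i-1},\dots,u_{j+1},u_j$, which is a cycle. Either way $G$ contains a cycle, so the dichotomy (cycle or infinite regress) holds whenever there is no root, and the theorem follows.

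There is no deep obstacle here; the work is bookkeeping, and the only care-points are: (i) phrasing the recursion so that its ``a fresh parent is available'' guard matches the condition $u_i\notin\{u_j\}_{j=0}^{i-1}$ in the target notion of infinite regress verbatim; (ii) explicitly flagging the use of Dependent Choice for the infinite branch; and (iii) peeling off the self-loop corner case $j=i$ before the ``close the path into a cycle'' step. I would also record two easy sanity checks: if $G$ is finite the infinite-regress alternative is vacuous and the claim collapses to ``a finite directed graph with no root has a cycle'' (equivalently, a finite DAG admits a topological order, as used in Theorem~\ref{theorem:forward_infer}); and the nonempty-$V$ assumption is genuinely needed, since the empty graph has none of the three features.
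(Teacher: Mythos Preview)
Your proof is correct and follows essentially the same approach as the paper's: both construct a backward walk from an arbitrary node and argue that it either extends indefinitely (yielding an infinite regress) or closes up on itself (yielding a cycle). The paper packages this as a contradiction via a hypothetical maximum-length distinct-node path rather than as a direct recursion, and it does not explicitly flag Dependent Choice or the $V\neq\emptyset$ caveat you note, but the combinatorial content is identical.
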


\begin{proof}
Proving by contradiction, suppose instead that a graph has no root, no cycle, and no infinite regress. Since there's no infinite regress, there exists a nonnegative integer $n$ that is the maximum length of a path of distinct nodes ending at some $u \in V$. Let $(u_n, u_{n-1}, \dots, u_1, u_0)$ be that maximum-length path.

Because $G$ doesn't have a root, $\mathrm{deg}^-(u) \ge 1$ for all $u \in V$ and thus $\mathrm{deg}^{-}(u_n) \ge 1$, so there exists a node $v \in V$ such that $(v, u_n) \in E$ is an edge.

If $v \in \{u_i\}_{i=0}^{n}$, then $v = u_i$ for some integer $0 \le i \le n$. We can construct a new path $(u_n, u_{n-1}, \dots, u_i, u_n)$. It's a path because $(u_i, u_n) = (v, u_n) \in E$ and $(u_i, u_{i-1}) \in E$ for all $1 \le i \le n$; it's a cycle because it starts and ends at $u_n$. This violates the acyclic assumption, so $v \notin \{u_i\}_{i=0}^{n}$ is distinct from all nodes in the path.

We can thus construct a new path $(v, u_n, u_{n-1}, \dots, u_1, u_0)$ of length $n+1$, where all nodes have been shown to be distinct. However, this contradicts the condition that the maximum length of distinct-node paths is $n$. Therefore, it's impossible for a directed graph to have no root, no cycle, and no infinite regress at the same time.
\end{proof}




GMT is a general theorem about directed graphs, proven mathematically. It applies to all problems characterized by objects and directed binary relations between them (i.e. describable by a directed graph), such as "$X$ causes $Y$" and "$X$ justifies $Y$."

If we define a directed graph where nodes are propositions and edge $(u, v)$ means "$u$ justifies $v$" or "$u$ is a part of the justification for $v$", then GMT entails that we either settle with foundationalism (root that isn't justified), coherentism (cycle that justifies itself), or infinitism (infinite regress of justification chain) - we cannot simultaneously eliminate all 3 of them. Notice how we never used the meaning of "justification" in our proof, only the directed binary form of "$A$ justifies $B$."

\end{appendices}

\end{document}